\crefname{lemma}{lemma}{lemmas}
\newcommand{\DIT}[2]{\acs{dit}-#1/#2}
\newcommand{\MP}[1]{\ensuremath{\mathcal{M}\lft[#1\rgt]}}
\newcommand{\cmark}{\ding{51}}
\newcommand{\xmark}{\ding{55}}
\newcommand{\ema}{\hat{\bm{\theta}}_\gamma}
\newcommand{\emaG}[1]{\hat{\bm{\theta}}_{#1}}
\newcommand{\var}{\bm{\theta}}
\newcommand{\sigrel}{\ensuremath{\sigma_{\mathrm{rel}}}}
\newcommand{\lft}{\mathopen{}\mathclose\bgroup\left}
\newcommand{\rgt}{\aftergroup\egroup\right}
\DeclarePairedDelimiterX{\infdivx}[2]{(}{)}{%
  #1\;\delimsize\|\;#2%
}
\newcommand{\R}{\mathbb{R}}
\newcommand{\E}{\mathbb{E}}
\renewcommand{\vec}[1]{\mathbf{#1}}
\newcommand{\mat}[1]{\mathbf{#1}}
\newcommand{\trp}[1]{#1^\top}
\renewcommand{\det}[1]{\operatorname{det}\lft( #1 \rgt)}
\newcommand{\defeq}{\triangleq}
\DeclareRobustCommand\onedot{\futurelet\@let@token\@onedot}
\def\@onedot{\ifx\@let@token.\else.\null\fi\xspace}
\def\ie{\emph{i.e}\onedot}
\def\iid{\emph{i.i.d}\onedot} 
\pgfplotsset{
    compat=1.17,
    compat/show suggested version=false,
    legend image code/.code={
        \draw[mark indices={2}] plot coordinates {
            (0cm,0cm) (0.15cm,0cm) (0.3cm,0cm)
        };
    }
}
\definecolor{C0}{HTML}{0090FF}
\definecolor{C1}{HTML}{E54D2E}
\definecolor{C2}{HTML}{46A758}
\definecolor{C3}{HTML}{F76B15}
\definecolor{C4}{HTML}{D6409F}
\definecolor{C5}{HTML}{AD7F58}
\NewDocumentCommand\AddMinimumPlot{O{black}O{below}mm}{%
    \addplot [
        name path=lower,
        thick,
        mark=*,
        mark size={2.5pt},
        mark indices={\ldata{0}{#4-argmin}},
        nodes near coords align={#2},
        #1
        ] coordinates { \rawdata{#4} };

    \ifblank{#3}{}{\addlegendentry{#3};}
}
\NewDocumentCommand\AddMaximumPlot{O{black}O{above}mm}{%
    \addplot [
        name path=lower,
        thick,
        mark=*,
        mark size={2.5pt},
        mark indices={\ldata{0}{#4-argmax}},
        nodes near coords align={#2},
        #1
        ] coordinates { \rawdata{#4} };

    \ifblank{#3}{}{\addlegendentry{#3};}
}
\DeclareAcronym{dit}{
    short=DiT,
    long=Diffusion Transformer,
    cite=peebles_scalable_2023,
}
\DeclareAcronym{vit}{
    short=ViT,
    long=Vision Transformer,
    cite=peebles_scalable_2023,
}
\DeclareAcronym{mapdit}{
    short=MaP-DiT,
    long=magnitude-preserving Diffusion Transformer,
}
\DeclareAcronym{fid}{
    short=FID,
    long=Fréchet inception distance,
    cite=heusel_gans_2018,
}
\DeclareAcronym{sfid}{
    short=sFID,
    long=spatial FID,
    cite=nash_generating_2021,
}
\DeclareAcronym{ema}{
    short=EMA,
    long=exponential moving average,
}
\DeclareAcronym{adaln}{
    short=AdaLN,
    long=Adaptive Layer Normalization,
}
\DeclareAcronym{vae}{
    short=VAE,
    long=variational autoencoder,
    cite=kingma2013auto,
}
\DeclareAcronym{silu}{
    short=SiLU,
    long=sigmoid linear unit,
    cite=elfwing2018sigmoid,
}
\newcommand{\defdata}[2]{\expandafter\newcommand\csname data-#1\endcsname{#2}}
\newcommand{\ldata}[2]{\ifdata{#2}\rawdata{#2}\else#1\fi}
\newcommand{\ifdata}[1]{\ifcsname data-#1\endcsname}
\newcommand{\rawdata}[1]{\csname data-#1\endcsname}
\title[Exploring Magnitude Preservation and Rotation Modulation]{Exploring Magnitude Preservation and Rotation Modulation in Diffusion Transformers}
\thanks{Equal contribution. Author ordering determined by coin flip.} 
\begin{document}

\maketitle
\vspace{-1.5cm}
\begin{abstract}%
    Denoising diffusion models exhibit remarkable generative capabilities, but remain challenging to train due to their inherent stochasticity, where high-variance gradient estimates lead to slow convergence. Previous works have shown that magnitude preservation helps with stabilizing training in the U-net architecture. This work explores whether this effect extends to the Diffusion Transformer (DiT) architecture. As such, we propose a magnitude-preserving design that stabilizes training without normalization layers. Motivated by the goal of maintaining activation magnitudes, we additionally introduce rotation modulation, which is a novel conditioning method using learned rotations instead of traditional scaling or shifting. Through empirical evaluations and ablation studies on small-scale models, we show that magnitude-preserving strategies significantly improve performance, notably reducing FID scores by $\sim$12.8\%. Further, we show that rotation modulation combined with scaling is competitive with AdaLN, while requiring $\sim$5.4\% fewer parameters. This work provides insights into conditioning strategies and magnitude control. We will publicly release the implementation of our method.
\end{abstract}

\begin{keywords}%
    Diffusion Transformer, Magnitude Preservation, Condition Modulation.
\end{keywords}

\vspace{-0.2cm}

\section{Introduction}\label{sec:intro}

\begin{wrapfigure}{r}{0.44\linewidth}
    \centering
    \vspace{-7mm}
    \begin{subfigure}[t]{0.4\linewidth}
        \centering
        \captionsetup{labelformat=empty}
        \caption{\textbf{DiT}}
        \includegraphics[width=\linewidth]{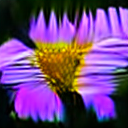}
    \end{subfigure}
    \hspace{10pt}
    \begin{subfigure}[t]{0.4\linewidth}
        \centering
        \captionsetup{labelformat=empty}
        \caption{\textbf{MaP-DiT}}
        \includegraphics[width=\linewidth]{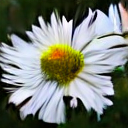}
    \end{subfigure}
    \caption{Effect of magnitude preservation and weight control on DiT-S/4.}
    \label{tab:intro_example}
    \vspace{-3mm}
\end{wrapfigure}

Denoising diffusion models \cite{ho2020denoising,sohl2015deep,song2020score} have gained prominence for their remarkable generative capabilities but remain challenging to train due to their iterative and stochastic nature, where high-variance gradient estimates lead to slow convergence. At the core of this process is the task of predicting a Gaussian noise vector with unit magnitude at each timestep. This target implicitly defines a magnitude constraint on the model's outputs. Yet, most diffusion architectures do not enforce this constraint at the architectural level.

We hypothesize that aligning the model's internal representations with this denoising objective can lead to more stable training. In particular, preserving activation magnitudes across the network may help maintain signal consistency and reduce training instabilities caused by uncontrolled norm growth or collapse.

Recent work by \citet{karras_analyzing_2024} supports this view, showing that controlling activation magnitudes layer-by-layer in the ADM architecture \citep{dhariwal_diffusion_2021} significantly improves performance. They advocate for extending these techniques to other architectures, including transformer-based diffusion models, such as \acp{dit}. This architecture replaces the conventional U-Net backbone with a transformer-based design, building on the \ac{vit}.

While \citet{karras_analyzing_2024} focused on achieving unit activation magnitude in each ADM layer, their methods cannot be directly applied. The \ac{dit} architecture incorporates \ac{adaln} modulation blocks, which scale and shift inputs based on label and timestep conditioning. These modulation blocks are the only way that the conditioning variables influence the output. These adaptions call for extending the activation control techniques to accommodate arbitrary magnitudes.

To this end, we propose architecture modifications that ensures activation magnitudes are preserved across all components of the model, assuming normalized training data as input, while avoiding explicit activation normalization in every intermediate step. Additionally, we constrain model weight growth by enforcing unit magnitude constraints on parameters. Our approach ensures stable training dynamics without introducing activation biases or requiring manual adjustments. In summary, our contributions are as follows:
\begin{itemize}[topsep=1ex,itemsep=0ex,partopsep=1ex,parsep=1ex]
    \item We extend the magnitude preserving techniques by extending their proofs to support arbitrary magnitudes without any assumption on the underlying magnitude, enabling their application to transformer-based architectures like \ac{dit}.
    
    \item Furthermore, we conduct comprehensive ablation evaluations, showing that magnitude preservation and weight control techniques demonstrate significant performance gains across various metrics, including notable improvements in FID-10K scores.
    
    \item Lastly, we introduce a novel modulation block: \emph{rotation modulation}. Instead of learning a scaling or translation, it learns a rotation, based on the conditioning variables. We conduct an ablation study of the three types of modulation.
\end{itemize}
\section{Magnitude Preservation and Weight Growth Control}\label{sec:mp}
Throughout our work, we adopt the concept of \textit{expected magnitude}, introduced by \citet{karras_analyzing_2024}, which is defined for a random multivariate variable $\vec{x} \in \R^n$ as
\begin{equation}
    \MP{\vec{x}} \defeq \sqrt{\frac{1}{n} \sum_{i=1}^{n} \E\lft[x_i^2\rgt]}.
\end{equation}

The overarching goal is to ensure that every component of our modified DiT architecture operates in a magnitude-preserving manner. Specifically, for any component $f$, we design a magnitude preserving version $f_\text{MP}$, such that for an any input $\vec{x}$, we have $\MP{f_\text{MP}(\vec{x})} = \MP{\vec{x}}$. Except for the attention mechanism, we show that it is possible to generalize the magnitude-preservation techniques from \citet{karras_analyzing_2024} to accommodate arbitrary magnitudes $\MP{\vec{x}} = \sigma$, instead of restricting them to unit magnitude $\MP{\vec{x}} = 1$. Derivations for all techniques are provided in \Cref{sec:proofs}.

\begin{lemma} \label{lem:att}
    Let $\mat{A} \in \R^{T \times T}$ be an unnormalized attention map and $\mat{V} \in \R^{T \times n}$ with $\MP{\vec{v}_t} = \sigma$ for all $t \in [T]$. Further, define attention as
    \begin{equation}
        \operatorname{att}(\mat{A}, \mat{V}) \defeq \operatorname{softmax}_{\beta}  (\mat{A}) \mat{V}, \quad \operatorname{softmax}_{\beta}(\mat{A})_{ij} \defeq \frac{\exp(\nicefrac{a_{ij}}{\beta})}{\sum_{k=1}^n \exp(\nicefrac{a_{ik}}{\beta})}.
    \end{equation}
    Then, $\MP{\operatorname{att}(\mat{A},\mat{V})_t} \leq \sigma$ for all $t \in [T]$. As $\beta \to 0$, the inequality becomes equality.
\end{lemma}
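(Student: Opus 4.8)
The plan is to reduce the statement to coordinate-wise convexity of $x \mapsto x^2$. Fix a query index $t \in [T]$ and let $\vec{p} = \operatorname{softmax}_\beta(\mat{A})_t$ be the $t$-th row of the normalized attention weights; by construction $p_j \ge 0$ and $\sum_{j=1}^T p_j = 1$, so $\vec{p}$ is a probability vector and the $t$-th output row is the convex combination $\operatorname{att}(\mat{A},\mat{V})_t = \sum_{j=1}^T p_j \vec{v}_j$. Since $\mat{A}$ is a fixed map, the only randomness is carried by $\mat{V}$, and $\vec{p}$ may be treated as a deterministic weight vector throughout.

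Next I would expand the expected magnitude one coordinate at a time. Writing $y_i = \sum_{j} p_j v_{j,i}$ for the $i$-th coordinate of the output row, convexity of $\phi(x) = x^2$ gives, \emph{pointwise on the sample space}, the deterministic inequality $y_i^2 = \phi\bigl(\sum_j p_j v_{j,i}\bigr) \le \sum_j p_j \phi(v_{j,i}) = \sum_j p_j v_{j,i}^2$; crucially this needs no independence among the rows or columns of $\mat{V}$. Taking expectations, summing over $i$, dividing by $n$, and exchanging the finite sums yields
\[
\MP{\operatorname{att}(\mat{A},\mat{V})_t}^2 = \frac1n\sum_{i=1}^n \E\bigl[y_i^2\bigr] \le \sum_{j=1}^T p_j\Bigl(\frac1n\sum_{i=1}^n \E\bigl[v_{j,i}^2\bigr]\Bigr) = \sum_{j=1}^T p_j\,\MP{\vec{v}_j}^2 = \sigma^2,
\]
using $\MP{\vec{v}_j} = \sigma$ for all $j$ and $\sum_j p_j = 1$. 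Taking square roots gives $\MP{\operatorname{att}(\mat{A},\mat{V})_t} \le \sigma$.

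For the equality claim I would analyze the $\beta \to 0$ limit of $\operatorname{softmax}_\beta$. Assuming the $t$-th row of $\mat{A}$ has a unique maximizer $j^\star = \argmax_k a_{tk}$, the weights $\operatorname{softmax}_\beta(\mat{A})_t$ converge to the one-hot vector $\vec{e}_{j^\star}$ as $\beta \to 0$, hence $\operatorname{att}(\mat{A},\mat{V})_t \to \vec{v}_{j^\star}$. Since (given finite second moments of $\mat{V}$) the map $\vec{p} \mapsto \frac1n\sum_i \E\bigl[(\sum_j p_j v_{j,i})^2\bigr]$ is a polynomial, hence continuous, in $\vec{p}$, continuity gives $\MP{\operatorname{att}(\mat{A},\mat{V})_t}^2 \to \MP{\vec{v}_{j^\star}}^2 = \sigma^2$; equivalently, the Jensen step above is tight at a point-mass weight vector. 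I would add the caveat that if the row maximizer is not unique, the limit is a uniform average over the maximizing set, which still obeys $\le \sigma$ but attains it only when those value vectors are perfectly aligned — so the equality assertion is understood generically.

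There is no deep obstacle here; the only care needed is (i) being explicit that the expectation is taken over $\mat{V}$ with $\mat{A}$ held fixed, so $\vec{p}$ is deterministic and the convexity step requires no independence between distinct value vectors, and (ii) making the $\beta \to 0$ equality precise, which needs the continuity argument above and, strictly, a unique-maximizer genericity assumption. I expect point (ii) to be the part that most needs careful phrasing.
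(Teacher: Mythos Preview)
Your proposal is correct and follows essentially the same route as the paper: both arguments apply Jensen's inequality to the convex map $x\mapsto x^2$ using the softmax row as a probability vector, then observe that equality holds when that row degenerates to a one-hot vector in the $\beta\to 0$ limit. Your write-up is in fact more careful than the paper's, which treats the weights $b_{it}$ as deterministic without comment and dismisses the equality case as ``trivial to see'' without your continuity and unique-maximizer discussion.
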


\paragraph{Self-attention}
In self-attention layers, \citet{karras_analyzing_2024} propose to use \emph{cosine attention} \citep{luo_cosine_2017, nguyen_enhancing_2023}, along with activation normalization. While cosine attention aligns with our goals of controlling activation growth, normalizing activations does not. As such, we replace the dot-product attention by cosine attention, where similarity between vectors $\vec{x}$ and $\vec{y}$ is computed by
\begin{equation}
    \mathrm{cosim}(\vec{x}, \vec{y}) \defeq \cos(\phi_{\vec{x},\vec{y}}) = \frac{\vec{x}^\top \vec{y}}{\| \vec{x} \| \| \vec{y} \|} \in [-1, 1].
\end{equation}
Here, $\phi_{\vec{x}, \vec{y}}$ denotes the angle between $\vec{x}$ and $\vec{y}$. This notion of similarity solely depends on the direction of the vectors and not on their length, preventing uncontrollable growth of activations. \Cref{lem:att} shows that the attention mechanism only decreases the input magnitude under any attention mechanism, such as cosine and dot-product attention. Hence, we argue that normalizing activations are not necessary, since the attention mechanism cannot increase the magnitude of its input anyways.

In conclusion, under the assumptions of Lemmas \ref{lem:att} to \ref{lem:silu}, we are able to show for all layers that the output magnitude is equal to or upper bounded by the input magnitude. Therefore, at least at initialization when all assumptions hold true, magnitude cannot increase throughout the model.

\paragraph{Weight growth control}
\citet{karras_analyzing_2024} showed that controlling weight growth is crucial for ensuring stable training dynamics. Because of this, we believe the same holds true for the DiT architecture. They propose using forced weight normalization, such that after each training step, each weight vector is normalized to unit magnitude. Specifically, for each linear projection with a weight matrix $\mat{W} \in \R^{n \times m}$, the normalization is applied as $\vec{w}_i \leftarrow \nicefrac{\vec{w}_i}{\|\vec{w}_i\|}$. This operation ensures that each weight vector is normalized to lie on the unit-magnitude hypersphere. By doing so, the training gradients are effectively projected onto the tangent plane of this hypersphere, preventing the weights from growing unbounded while preserving their directionality.

\section{Modulation} \label{sec:modulation}
Following \citet{peebles_scalable_2023}, we condition the latent image tokens $\mathbf{X}$ on their class labels via \ac{adaln}. In each \ac{dit} block, \ac{adaln} learns three vectors (scale, shift, and gate $\vec{s}, \vec{b}, \vec{g} \in \R^n$) for both the attention and MLP, and applies them as follows
\begin{equation}
    \vec{x}_t \gets \vec{g} \odot \mathrm{layer}\lft(\vec{s} \odot \vec{x}_t + \vec{b} \rgt),
\end{equation}
for each $t \in [T]$. While \ac{adaln} is effective, it can disrupt the magnitude of the latent tokens, because the shift may move activations away from zero and the scale can amplify magnitude. Although our theoretical results remain valid under arbitrary magnitudes, a more natural and principled conditioning method that inherently preserves magnitude is desirable. To this end, we propose \emph{rotation modulation}. Instead of scaling and shifting, it predicts a set of rotation angles and applies them to the latent tokens.

\begin{lemma} \label{lem:rot_mod}
    Let $\mat{R} \in \R^{n \times n}$ be a rotation matrix, such that $\trp{\mat{R}} = \mat{R}^{-1}$ and $\det{\mat{R}} = 1$. Further, let $\vec{x} \in \R^n$ be the input vector, then
    \begin{equation}
        \MP{\mat{R} \vec{x}} = \MP{\vec{x}}.
    \end{equation}
\end{lemma}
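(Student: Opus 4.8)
The plan is to reduce the expected-magnitude identity to the elementary fact that orthogonal matrices preserve Euclidean norm, and then push that deterministic identity through the expectation. First I would rewrite the definition in terms of the squared norm: by linearity of expectation, $\MP{\vec{x}}^2 = \frac{1}{n}\sum_{i=1}^n \E[x_i^2] = \frac{1}{n}\E\lft[\|\vec{x}\|^2\rgt]$, and likewise $\MP{\mat{R}\vec{x}}^2 = \frac{1}{n}\E\lft[\|\mat{R}\vec{x}\|^2\rgt]$. So it suffices to compare $\E[\|\mat{R}\vec{x}\|^2]$ with $\E[\|\vec{x}\|^2]$.

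The core step is the pointwise identity, valid for every realization of $\vec{x}$: $\|\mat{R}\vec{x}\|^2 = (\mat{R}\vec{x})^\top(\mat{R}\vec{x}) = \vec{x}^\top \trp{\mat{R}}\mat{R}\vec{x} = \vec{x}^\top\vec{x} = \|\vec{x}\|^2$, where I use $\trp{\mat{R}} = \mat{R}^{-1}$, i.e. $\trp{\mat{R}}\mat{R} = \mat{I}$. Note that only orthogonality is needed here; the condition $\det{\mat{R}} = 1$ (properness of the rotation) plays no role in the magnitude argument and could in principle be dropped. Taking expectations of this pointwise equality of random variables and combining with the rewriting from the first step gives $\MP{\mat{R}\vec{x}}^2 = \MP{\vec{x}}^2$, and taking the nonnegative square root yields the claim. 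Equivalently, one can write the computation coordinatewise: $\sum_i \E\big[(\mat{R}\vec{x})_i^2\big] = \E\big[\sum_{j,k}\big(\sum_i R_{ij}R_{ik}\big)x_j x_k\big] = \E\big[\sum_j x_j^2\big]$, using $\sum_i R_{ij}R_{ik} = \delta_{jk}$.

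I expect no real obstacle here; the statement is essentially a one-line consequence of $\trp{\mat{R}}\mat{R} = \mat{I}$. The only point requiring a word of care is the measure-theoretic bookkeeping: since $\|\mat{R}\vec{x}\|^2$ and $\|\vec{x}\|^2$ are equal as random variables, one is integrable exactly when the other is, so $\MP{\cdot}$ is simultaneously well-defined (or simultaneously infinite) on both sides, and the identity holds in either case. This keeps the proof fully rigorous without any integrability assumption beyond what is implicitly needed for $\MP{\vec{x}}$ to make sense.
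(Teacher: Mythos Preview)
Your proof is correct and follows essentially the same route as the paper: rewrite $\MP{\cdot}^2$ as $\frac{1}{n}\E[\|\cdot\|^2]$, use the pointwise identity $\|\mat{R}\vec{x}\|^2 = \vec{x}^\top\trp{\mat{R}}\mat{R}\vec{x} = \|\vec{x}\|^2$ from orthogonality, and take expectations. Your additional remarks that $\det{\mat{R}}=1$ is unnecessary and that integrability is automatic are correct and go slightly beyond what the paper records, but the core argument is identical.
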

See \Cref{sec:proofs} for a proof. Since a full $n$-dimensional rotation requires $\nicefrac{n(n-1)}{2}$ degrees of freedom, we partition each token into $\nicefrac{d}{2}$ disjoint two-dimensional sub-vectors and predict a single rotation angle per pair. Specifically, for each sub-vector $\vec{x}_{t,[i,i+1]} \in \R^2$, the model learns to predict an angle $\theta_i \in \R$ and modulates the sub-vector similarly to Rotary Position Embeddings (RoPE) \citep{su_roformer_2023}:
\begin{equation}
    \vec{x}_{t,[i,i+1]} \gets \begin{bmatrix}
        \cos \theta_i & -\sin \theta_i \\
        \sin \theta_i & \cos \theta_i
    \end{bmatrix} \vec{x}_{t,[i,i+1]}.
\end{equation}
As a result, this ``patchified'' rotation requires only $\nicefrac{d}{2}$ parameters. In contrast to previous variants, rotation modulation provides a method for conditioning on auxiliary variables, while preserving magnitude (see \Cref{lem:rot_mod}).
\NewDocumentCommand\Four{smm}{%
    \begin{subfigure}[b]{0.19\textwidth}
        \centering
        \captionsetup{labelformat=empty}
        \IfBooleanTF{#1}{%
            \caption{#3}
            \includegraphics[width=\textwidth]{figures/four/#2}%
        }{%
            \includegraphics[width=\textwidth]{figures/four/#2}
            \caption{#3}%
        }
        \label{fig:four-#2}
    \end{subfigure}%
}

\section{Results}\label{sec:res}

\begin{figure*}[t]
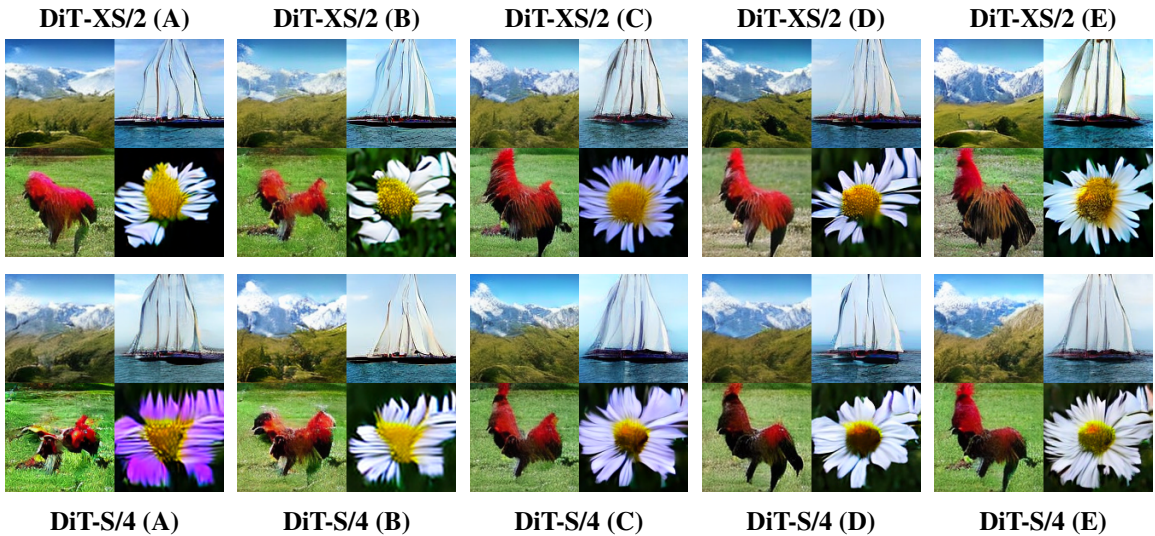

    \centering
    \Four*{A-XS-2}{\textbf{\DIT{XS}{2} (A)}} \hfill
    \Four*{B-XS-2}{\textbf{DiT-XS/2 (B)}} \hfill
    \Four*{C-XS-2}{\textbf{DiT-XS/2 (C)}} \hfill
    \Four*{D-XS-2}{\textbf{DiT-XS/2 (D)}} \hfill
    \Four*{E-XS-2}{\textbf{DiT-XS/2 (E)}} \\[-8pt]
    \Four{A-S-4}{\textbf{DiT-S/4 (A)}} \hfill
    \Four{B-S-4}{\textbf{DiT-S/4 (B)}} \hfill
    \Four{C-S-4}{\textbf{DiT-S/4 (C)}} \hfill
    \Four{D-S-4}{\textbf{DiT-S/4 (D)}} \hfill
    \Four{E-S-4}{\textbf{DiT-S/4 (E)}} \\
    \caption{Four samples from all models and configurations, where all samples were generated with the same seed, a guidance scale of 5.0, and an EMA relative standard deviation of \SI{10}{\percent}. The samples include ``alp'' (970), ``schooner'' (780), ``cock'' (7), and ``daisy'' (985).}
    \label{fig:main-samples}
\end{figure*}

In order to accommodate limited compute, we used the $128 \times 128$ variant of ImageNet
\citep{ILSVRC15}. To further accelerate training, we preprocessed the dataset by encoding each
image with a pre-trained \ac{vae}. Further, due to limited resources, we experimented only with
\DIT{XS}{2}, \DIT{S}{4}, and \DIT{S}{2} models and trained them for 400K steps. The former two were
used to ablate the magnitude preservation techniques, while the latter was used for ablations of
the modulation techniques.

For evaluation, we mainly consider \ac{fid}, computed over 10K samples between the ground truth
dataset and model samples. In \Cref{sec:convergence}, we also report \ac{sfid}, precision, recall,
and inception scores. Note that while the use of smaller models and lower-resolution datasets
limits compatibility with larger models in prior work, the focus is on analyzing architectural
effects under similarly constrained model sizes.

\paragraph{Magnitude preservation}
\begin{wraptable}{r}{0.55\linewidth}
    \sisetup{detect-weight=true}
    \robustify\bfseries
    \centering
    \vspace{-5mm}
    \caption{FID-10K scores for all configurations.}
    \begin{tabular}{@{}l@{\hspace{0.5em}}lS[table-format=2.2]S[table-format=3.2]@{}}
        \toprule
        \multicolumn{2}{@{}l}{\textbf{Config}} & \textbf{DiT-XS-2}        & \textbf{DiT-S-4}                   \\
        \midrule
        A                                      & Baseline                 & 99.43            & 103.77          \\
        B                                      & + Cosine attention       & 99.14            & 102.82          \\
        C                                      & + Magnitude preservation & \bfseries 86.49  & \bfseries 90.61 \\
        D                                      & + Weight growth control  & 89.38            & 90.63           \\
        E                                      & + No normalization       & 86.72            & 91.64           \\
        \bottomrule
    \end{tabular}
    \label{tab:main-result}
\end{wraptable}
We define five configurations (A–E), with Config~A as the \ac{dit} baseline. Each subsequent configuration adds one of our techniques, culminating in Config~E, which incorporates all modifications and removes activation normalization. For full details, consult \Cref{tab:dit_attributes} in \Cref{sec:configurations}. Our results show that magnitude-preserving techniques notably improve performance. As shown in \Cref{fig:main-samples}, configurations C--E produce higher quality images compared to configurations A and B. This qualitative improvement is supported by significantly lower \ac{fid} scores, as shown in \Cref{tab:main-result}.  This trend holds across additional metrics such as Inception score, precision, recall, and \acs{sfid}-10K.  For a more detailed analysis, refer to \Cref{sec:convergence}. However, these improvements come with increased training cost. Config~E incurs an average runtime overhead of $\SI{8.5}{\percent}$ compared to the original \ac{dit} implementation. Additionally, we visualize the evolution of activation magnitudes before and after training in \Cref{sec:mag_evol}, offering insight into the underlying causes of performance differences.

\paragraph{Modulation}
\begin{wraptable}{r}{0.38\linewidth}
    \sisetup{detect-weight=true}
    \robustify\bfseries
    \centering
    \vspace{-5mm}
    \caption{Performance of various modulation combinations on \DIT{XS}{2}.}
    \label{tab:mod}
    \begin{tabular}{@{}cccS[table-format=2.2]@{}}
        \toprule
        \textbf{Scale} & \textbf{Shift} & \textbf{Rotate} & \textbf{\Acs{fid} $\downarrow$} \\
        \midrule
        \cmark         & \xmark         & \xmark          & 72.03                           \\
        \xmark         & \cmark         & \xmark          & 85.23                           \\
        \xmark         & \xmark         & \cmark          & 84.62                           \\
        \cmark         & \cmark         & \xmark          & \bfseries 69.28                 \\
        \cmark         & \xmark         & \cmark          & 70.86                           \\
        \xmark         & \cmark         & \cmark          & 74.01                           \\
        \cmark         & \cmark         & \cmark          & 72.19                           \\
        \bottomrule
    \end{tabular}
    \vspace{-4mm}
\end{wraptable}
To isolate the effect of different modulation types, we perform an ablation over all combinations of scaling, shifting, and rotation (see \Cref{sec:modulation}), using the \DIT{S}{2} model. Results are shown in \Cref{tab:mod}, with full details in \Cref{tab:mod-full-results}. Scaling provides the largest standalone benefit, significantly reducing \ac{fid}. Combining scaling with either shifting or rotation improves performance further, with the best result achieved by scaling and shifting. Interestingly, using all three modulations does not yield further gains, suggesting potential interference between them. Note that rotation modulation uses half the parameters of scaling or shifting, which may explain its comparatively lower impact.

\section{Discussion} \label{sec:discussion}
Our results show that magnitude-preserving techniques can effectively be applied to \ac{dit} models, both theoretically and empirically. However, we observed increased training time when using all proposed techniques. Future work could compare models under equal training durations to fairly assess this trade-off. A further limitation is the magnitude-preserving SiLU, which cannot be naturally extended to arbitrary magnitudes. Future work could experiment with other activation functions, such as Leaky ReLU. For this, we provide a magnitude-preserving version in \Cref{lem:lrelu}.

In conclusion, we have shown that magnitude preservation and weight growth control have a positive impact on the training dynamics in \ac{dit}. Future work could investigate scaling these results up to larger models, as well as text-to-image models. Furthermore, we showed that rotation modulation provides an alternative to existing modulation techniques.

\bibliography{references}

\begin{thebibliography}{16}
\providecommand{\natexlab}[1]{#1}
\providecommand{\url}[1]{\texttt{#1}}
\expandafter\ifx\csname urlstyle\endcsname\relax
  \providecommand{\doi}[1]{doi: #1}\else
  \providecommand{\doi}{doi: \begingroup \urlstyle{rm}\Url}\fi

\bibitem[Dhariwal and Nichol(2021)]{dhariwal_diffusion_2021}
Prafulla Dhariwal and Alexander Nichol.
\newblock Diffusion models beat gans on image synthesis.
\newblock \emph{Advances in neural information processing systems},
  34:\penalty0 8780--8794, 2021.

\bibitem[Elfwing et~al.(2018)Elfwing, Uchibe, and Doya]{elfwing2018sigmoid}
Stefan Elfwing, Eiji Uchibe, and Kenji Doya.
\newblock Sigmoid-weighted linear units for neural network function
  approximation in reinforcement learning.
\newblock \emph{Neural networks}, 107:\penalty0 3--11, 2018.

\bibitem[Heusel et~al.(2017)Heusel, Ramsauer, Unterthiner, Nessler, and
  Hochreiter]{heusel_gans_2018}
Martin Heusel, Hubert Ramsauer, Thomas Unterthiner, Bernhard Nessler, and Sepp
  Hochreiter.
\newblock Gans trained by a two time-scale update rule converge to a local nash
  equilibrium.
\newblock \emph{Advances in neural information processing systems}, 30, 2017.

\bibitem[Ho et~al.(2020)Ho, Jain, and Abbeel]{ho2020denoising}
Jonathan Ho, Ajay Jain, and Pieter Abbeel.
\newblock Denoising diffusion probabilistic models.
\newblock \emph{Advances in neural information processing systems},
  33:\penalty0 6840--6851, 2020.

\bibitem[Karras et~al.(2024)Karras, Aittala, Lehtinen, Hellsten, Aila, and
  Laine]{karras_analyzing_2024}
Tero Karras, Miika Aittala, Jaakko Lehtinen, Janne Hellsten, Timo Aila, and
  Samuli Laine.
\newblock Analyzing and improving the training dynamics of diffusion models.
\newblock In \emph{Proceedings of the IEEE/CVF Conference on Computer Vision
  and Pattern Recognition}, pages 24174--24184, 2024.

\bibitem[Kingma et~al.(2013)Kingma, Welling, et~al.]{kingma2013auto}
Diederik~P Kingma, Max Welling, et~al.
\newblock Auto-encoding variational bayes, 2013.

\bibitem[Luo et~al.(2018)Luo, Zhan, Xue, Wang, Ren, and Yang]{luo_cosine_2017}
Chunjie Luo, Jianfeng Zhan, Xiaohe Xue, Lei Wang, Rui Ren, and Qiang Yang.
\newblock Cosine normalization: Using cosine similarity instead of dot product
  in neural networks.
\newblock In \emph{International conference on artificial neural networks and
  machine learning}, pages 382--391. Springer, 2018.

\bibitem[Nash et~al.(2021)Nash, Menick, Dieleman, and
  Battaglia]{nash_generating_2021}
Charlie Nash, Jacob Menick, Sander Dieleman, and Peter Battaglia.
\newblock Generating images with sparse representations.
\newblock In \emph{International conference on machine learning}, pages
  7958--7968, 2021.

\bibitem[Nguyen et~al.(2023)Nguyen, Nguyen, Le, and
  Pham]{nguyen_enhancing_2023}
Quang-Huy Nguyen, Cuong~Q Nguyen, Dung~D Le, and Hieu~H Pham.
\newblock Enhancing few-shot image classification with cosine transformer.
\newblock \emph{IEEE Access}, 11:\penalty0 79659--79672, 2023.

\bibitem[Nichol and Dhariwal(2021)]{nichol_improved_2021}
Alexander~Quinn Nichol and Prafulla Dhariwal.
\newblock Improved denoising diffusion probabilistic models.
\newblock In \emph{International conference on machine learning}, pages
  8162--8171. PMLR, 2021.

\bibitem[Peebles and Xie(2023)]{peebles_scalable_2023}
William Peebles and Saining Xie.
\newblock Scalable diffusion models with transformers.
\newblock In \emph{Proceedings of the IEEE/CVF international conference on
  computer vision}, pages 4195--4205, 2023.

\bibitem[Russakovsky et~al.(2015)Russakovsky, Deng, Su, Krause, Satheesh, Ma,
  Huang, Karpathy, Khosla, Bernstein, Berg, and Fei-Fei]{ILSVRC15}
Olga Russakovsky, Jia Deng, Hao Su, Jonathan Krause, Sanjeev Satheesh, Sean Ma,
  Zhiheng Huang, Andrej Karpathy, Aditya Khosla, Michael Bernstein,
  Alexander~C. Berg, and Li~Fei-Fei.
\newblock {ImageNet Large Scale Visual Recognition Challenge}.
\newblock \emph{International journal of computer vision}, 115\penalty0
  (3):\penalty0 211--252, 2015.
\newblock \doi{10.1007/s11263-015-0816-y}.

\bibitem[Salimans et~al.(2016)Salimans, Goodfellow, Zaremba, Cheung, Radford,
  and Chen]{salimans_improved_2016}
Tim Salimans, Ian Goodfellow, Wojciech Zaremba, Vicki Cheung, Alec Radford, and
  Xi~Chen.
\newblock Improved techniques for training gans.
\newblock \emph{Advances in neural information processing systems}, 29, 2016.

\bibitem[Sohl-Dickstein et~al.(2015)Sohl-Dickstein, Weiss, Maheswaranathan, and
  Ganguli]{sohl2015deep}
Jascha Sohl-Dickstein, Eric Weiss, Niru Maheswaranathan, and Surya Ganguli.
\newblock Deep unsupervised learning using nonequilibrium thermodynamics.
\newblock In \emph{International conference on machine learning}, pages
  2256--2265. PMLR, 2015.

\bibitem[Song et~al.(2021)Song, Sohl-Dickstein, Kingma, Kumar, Ermon, and
  Poole]{song2020score}
Yang Song, Jascha Sohl-Dickstein, Diederik~P Kingma, Abhishek Kumar, Stefano
  Ermon, and Ben Poole.
\newblock Score-based generative modeling through stochastic differential
  equations.
\newblock In \emph{International conference on learning representations}, 2021.

\bibitem[Su et~al.(2024)Su, Ahmed, Lu, Pan, Bo, and Liu]{su_roformer_2023}
Jianlin Su, Murtadha Ahmed, Yu~Lu, Shengfeng Pan, Wen Bo, and Yunfeng Liu.
\newblock Roformer: Enhanced transformer with rotary position embedding.
\newblock \emph{Neurocomputing}, 568:\penalty0 127063, 2024.

\end{thebibliography}

\clearpage
\appendix
\section*{Appendix Contents}
\startcontents[sections]
\printcontents[sections]{l}{1}{\setcounter{tocdepth}{2}}
\newpage

\section{Implementation Details}

In line with \citet{karras_analyzing_2024}, we remove all biases from linear layers. To restore the capability of the model learning a bias, we concatenate ones to the channels of the model's input. Furthermore, we implement cosine attention by normalizing the queries and keys before computing their inner products. This is equivalent and computationally more efficient.

\section{Proofs} \label{sec:proofs}

In this section, we present the remaining lemmas and proofs, demonstrating that each model component preserves or upper bounds the magnitude of its input. Our results generalize \citet{karras_analyzing_2024} to arbitrary input magnitudes.

\subsection{Linear Layer}

\begin{lemma}\label{lem:linear}
    A linear layer preserves the magnitude of its input by normalizing the rows to euclidean unit. In particular, let $\mat{W} \in \mathbb{R}^{m \times n}$ and $\vec{x} \in \mathbb{R}^n$ an input vector. Assume that the input features are \iid, such that $\mathbb{E}[x_i x_j] = 0$ for all $i \neq j$ and $\mathbb{E}[x_i^2] = \sigma^2$ for all $i$. Denote by $\hat{\mat{W}}$ the row-wise normalized version of $\mat{W}$, defined as $\hat{w}_{ij} = \nicefrac{w_{ij}}{\|\vec{w}_i\|_2}$, where $\vec{w}_i$ is the $i$-th row of $\mat{W}$. Then
    \begin{equation}
        \MP{\hat{\mat{W}}\vec{x}} = \sigma.
    \end{equation}
\end{lemma}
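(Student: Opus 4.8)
The plan is to compute $\MP{\hat{\mat{W}}\vec{x}}^2$ directly from its definition and show it collapses to $\sigma^2$. Writing $\vec{y} = \hat{\mat{W}}\vec{x}$, the $i$-th component is $y_i = \sum_{j=1}^n \hat{w}_{ij} x_j$, so the first step is to expand
\begin{equation}
    \E[y_i^2] = \E\lft[\lft(\sum_{j=1}^n \hat{w}_{ij} x_j\rgt)^2\rgt] = \sum_{j=1}^n \sum_{k=1}^n \hat{w}_{ij}\hat{w}_{ik}\, \E[x_j x_k].
\end{equation}
Here the $\hat{w}_{ij}$ are treated as fixed scalars (the expectation is over the random input $\vec{x}$ only), so they pull out of the expectation.

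Next I would invoke the two stated assumptions on the input statistics: $\E[x_j x_k] = 0$ for $j \neq k$ kills all off-diagonal terms, and $\E[x_j^2] = \sigma^2$ on the diagonal. This reduces the double sum to $\E[y_i^2] = \sigma^2 \sum_{j=1}^n \hat{w}_{ij}^2 = \sigma^2 \|\hat{\vec{w}}_i\|_2^2$. The key observation is then that the row normalization $\hat{w}_{ij} = \nicefrac{w_{ij}}{\|\vec{w}_i\|_2}$ was defined precisely so that $\|\hat{\vec{w}}_i\|_2^2 = \sum_j w_{ij}^2 / \|\vec{w}_i\|_2^2 = 1$, hence $\E[y_i^2] = \sigma^2$ for every row $i$.

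Finally I would substitute back into the expected-magnitude definition:
\begin{equation}
    \MP{\hat{\mat{W}}\vec{x}} = \sqrt{\frac{1}{m}\sum_{i=1}^m \E[y_i^2]} = \sqrt{\frac{1}{m}\sum_{i=1}^m \sigma^2} = \sigma,
\end{equation}
which is the claim. There is no real obstacle here — the argument is a short second-moment computation — so the only thing to be careful about is bookkeeping: keeping the expectation strictly over $\vec{x}$ while the normalized weights are constants, and making sure the $\frac{1}{m}$ normalization in $\MP{\cdot}$ (output dimension) is not confused with the $\frac{1}{n}$ that would appear if one instead averaged over input coordinates. One might also remark that the \iid assumption is used only through its two consequences on first and second mixed moments, so the lemma holds under the weaker hypothesis of uncorrelated, identically-scaled coordinates as literally stated.
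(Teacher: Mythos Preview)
Your proposal is correct and follows essentially the same approach as the paper: expand the squared output coordinate, use $\E[x_j x_k]=0$ for $j\neq k$ to drop cross terms, use $\E[x_j^2]=\sigma^2$ on the diagonal, and then invoke the row normalization to make $\sum_j \hat{w}_{ij}^2 = 1$. Your added remark that only uncorrelatedness and equal second moments are needed (not full \iid) is a fair observation and does not conflict with the paper's argument.
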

\begin{proof}
    Define $\hat{\mat{W}}$ and $\vec{x}$ as in \Cref{lem:linear}, then
    \begin{align}
        \MP{\hat{\mat{W}}\vec{x}}^2 &= \frac{1}{m} \sum_{i=1}^{m} \E\lft[ \lft(\sum_{j=1}^{n} \hat{w}_{ij} x_j \rgt)^2 \rgt] \\
        &= \frac{1}{m} \sum_{i=1}^{m} \sum_{j=1}^{n} \hat{w}_{ij}^2 \E\lft[ x_j^2 \rgt] \\
        &= \frac{1}{m} \sum_{i=1}^{m} \frac{\sigma^2}{\|\vec{w}_i\|_2^2} \sum_{j=1}^{n} w_{ij}^2\\
        &= \frac{1}{m} \sum_{i=1}^{m} \sigma^2\\
        &= \sigma^2.
    \end{align}
    Thus, the normalized projection preserves the magnitude of the input.
\end{proof}
In practice, the normalization of $\mat{W}$ is performed in every forward pass to ensure that the rows have unit norm, maintaining the desired property of magnitude preservation. Furthermore, \Cref{lem:linear} extends to linear embeddings, which are equivalent to linear projections when inputs are one-hot encoded.

\subsection{Residual Connection}

\begin{lemma}\label{lem:res}
    A residual connection preserves the magnitude of two inputs for any magnitude by scaling its output. In particular, assume $\vec{x},\vec{y}\in\R^n$ are uncorrelated, \ie, $\E[x_i y_j] = 0, \forall i,j\in[n]$. Further, assume $\MP{\vec{x}} = \sigma$ and $\MP{\vec{y}} = \tau$. Let $\alpha \in [0,1]$, then
    \begin{equation}
        \MP{\sqrt{\alpha} \vec{x} + \sqrt{1-\alpha} \vec{y}}^2 = \alpha \sigma^2 + (1-\alpha) \tau^2.
    \end{equation}
    As a special case, if $\MP{\vec{x}} = \MP{\vec{y}} = \sigma$, then
    \begin{equation}
        \MP{\sqrt{\alpha} \vec{x} + \sqrt{1-\alpha}\vec{y}} = \sigma.
    \end{equation}
\end{lemma}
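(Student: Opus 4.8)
The plan is to expand the squared expected magnitude coordinate-by-coordinate and exploit the uncorrelatedness hypothesis to kill the cross term. Write $\vec{z} \defeq \sqrt{\alpha}\,\vec{x} + \sqrt{1-\alpha}\,\vec{y}$, so that $z_i = \sqrt{\alpha}\,x_i + \sqrt{1-\alpha}\,y_i$ for each $i \in [n]$. Squaring gives $z_i^2 = \alpha x_i^2 + 2\sqrt{\alpha(1-\alpha)}\,x_i y_i + (1-\alpha) y_i^2$, and taking expectations term by term yields $\E[z_i^2] = \alpha\,\E[x_i^2] + 2\sqrt{\alpha(1-\alpha)}\,\E[x_i y_i] + (1-\alpha)\,\E[y_i^2]$.

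The key step is that the middle term vanishes: by assumption $\E[x_i y_j] = 0$ for all $i,j\in[n]$, in particular for $j = i$, so $\E[x_i y_i] = 0$ and hence $\E[z_i^2] = \alpha\,\E[x_i^2] + (1-\alpha)\,\E[y_i^2]$. (Note $\alpha(1-\alpha) \geq 0$ since $\alpha \in [0,1]$, so the square root is well-defined, though this is irrelevant once the term drops out.)

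It then remains to average over coordinates and recognize the definition of $\mathcal{M}$. We compute
\begin{align}
    \MP{\vec{z}}^2 &= \frac{1}{n}\sum_{i=1}^{n}\E[z_i^2] = \frac{\alpha}{n}\sum_{i=1}^{n}\E[x_i^2] + \frac{1-\alpha}{n}\sum_{i=1}^{n}\E[y_i^2] = \alpha\,\MP{\vec{x}}^2 + (1-\alpha)\,\MP{\vec{y}}^2 = \alpha\sigma^2 + (1-\alpha)\tau^2.
\end{align}
The special case follows immediately: setting $\tau = \sigma$ gives $\MP{\vec{z}}^2 = \alpha\sigma^2 + (1-\alpha)\sigma^2 = \sigma^2$, hence $\MP{\vec{z}} = \sigma$.

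I do not anticipate a genuine obstacle here; the argument is a one-line linearity-of-expectation computation. The only point requiring any care is making explicit that ``uncorrelated'' as stated ($\E[x_i y_j]=0$ for \emph{all} $i,j$, not merely centered covariances) is exactly what is needed to annihilate the cross term $\E[x_i y_i]$ — no mean-zero assumption on $\vec{x}$ or $\vec{y}$ is invoked. I would also remark in passing that the result is what makes the convex-combination weighting $(\sqrt{\alpha},\sqrt{1-\alpha})$ the ``right'' one for magnitude preservation, mirroring the role it plays in \citet{karras_analyzing_2024}.
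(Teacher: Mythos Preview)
Your proof is correct and follows essentially the same approach as the paper: expand the square coordinate-wise, use linearity of expectation, invoke the hypothesis $\E[x_i y_i]=0$ to drop the cross term, and average to recover $\alpha\,\MP{\vec{x}}^2 + (1-\alpha)\,\MP{\vec{y}}^2$. The only cosmetic difference is that you name the combination $\vec{z}$ and carry the factor $2$ on the cross term explicitly (the paper's display omits it, harmlessly since the term vanishes).
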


\begin{proof}
    Define $\vec{x}$, $\vec{y}$, and $\alpha$ as in \Cref{lem:res}. Then,
    \begin{align}
        & \MP{\sqrt{\alpha} \vec{x} + \sqrt{1-\alpha} \vec{y}}^2 \\
        & = \frac{1}{n} \sum_{i=1}^n \E \lft[ \lft( \sqrt{\alpha} x_i + \sqrt{1-\alpha} y_i \rgt)^2 \rgt] \\
        & = \frac{1}{n} \sum_{i=1}^n \E \lft[ \alpha x_i^2 + (1-\alpha) y_i^2 + \sqrt{\alpha}\sqrt{1-\alpha} x_i y_i \rgt] \\
        & = \frac{\alpha}{n} \sum_{i=1}^n \E[x_i^2] + \frac{1-\alpha}{d} \sum_{i=1}^n \E[y_i^2] \\
        & = \alpha \MP{\vec{x}}^2 + (1-\alpha) \MP{\vec{y}}^2 \\
        & = \alpha \sigma^2 + (1-\alpha) \tau^2.
    \end{align}
    The special case follows as an application of the above.
\end{proof}

\subsection{Sigmoid Linear Unit (SiLU)}

\begin{lemma} \label{lem:silu}
    The \ac{silu} activation function can preserve its input magnitude by a scaling. In particular, define $s: \R \to \R$ as
    \begin{equation}
        s(\sigma) \defeq \frac{\sigma}{\sqrt{\E_{z \sim \mathcal{N}(0, \sigma)}[\operatorname{silu}^2(z)]}}.
    \end{equation}
    Let $\vec{x} \sim \mathcal{N}(\vec{0}, \sigma^2 \mat{I}_n)$ (note $\MP{\vec{x}} = \sigma$) then the \ac{silu} activation function scaled by $s(\sigma)$ preserves the magnitude of its input,
    \begin{equation}
        \MP{s(\sigma) \cdot \operatorname{silu}(\vec{x})} = \sigma.
    \end{equation}
\end{lemma}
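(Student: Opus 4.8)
The plan is to expand the definition of expected magnitude directly and reduce the vector statement to the scalar identity that defines $s(\sigma)$. Writing out $\MP{\,\cdot\,}^2$ for the vector $s(\sigma)\cdot\operatorname{silu}(\vec{x})$ and pulling the deterministic scalar $s(\sigma)$ out of the expectation gives
\begin{equation}
    \MP{s(\sigma)\cdot\operatorname{silu}(\vec{x})}^2 = \frac{1}{n}\sum_{i=1}^n \E\lft[\lft(s(\sigma)\,\operatorname{silu}(x_i)\rgt)^2\rgt] = s(\sigma)^2 \cdot \frac{1}{n}\sum_{i=1}^n \E\lft[\operatorname{silu}^2(x_i)\rgt].
\end{equation}
Next I would use the distributional assumption: since $\vec{x}\sim\mathcal{N}(\vec{0},\sigma^2\mat{I}_n)$, each coordinate $x_i$ is marginally $\mathcal{N}(0,\sigma^2)$, so $\E[\operatorname{silu}^2(x_i)] = \E_{z\sim\mathcal{N}(0,\sigma)}[\operatorname{silu}^2(z)]$ for every $i$ (following the paper's convention that $\mathcal{N}(0,\sigma)$ denotes the distribution with standard deviation $\sigma$). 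The average over $i$ is therefore this common value, and substituting $s(\sigma) = \sigma / \sqrt{\E_{z\sim\mathcal{N}(0,\sigma)}[\operatorname{silu}^2(z)]}$ makes the right-hand side collapse:
\begin{equation}
    \MP{s(\sigma)\cdot\operatorname{silu}(\vec{x})}^2 = s(\sigma)^2 \, \E_{z\sim\mathcal{N}(0,\sigma)}\lft[\operatorname{silu}^2(z)\rgt] = \sigma^2.
\end{equation}
Taking the nonnegative square root finishes the argument.

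There is essentially no analytic obstacle here — the proof is a one-line unfolding of definitions, mirroring the scaling step in \Cref{lem:linear,lem:res}. The only subtlety worth flagging in the write-up is that, unlike those earlier lemmas where the correcting constant is universal, the scale $s(\sigma)$ genuinely depends on $\sigma$ and the expectation $\E_{z\sim\mathcal{N}(0,\sigma)}[\operatorname{silu}^2(z)]$ has no closed form, so in practice it must be precomputed numerically (or tabulated) for the relevant $\sigma$ — which is also the reason this construction does not extend as cleanly to arbitrary magnitudes as the other components. I would additionally remark that full coordinate independence is not actually needed for the conclusion: only that each marginal of $\vec{x}$ is $\mathcal{N}(0,\sigma^2)$, since $\MP{\,\cdot\,}$ depends on the joint law only through the per-coordinate second moments; one could either state the lemma under that weaker hypothesis or note it immediately after the proof.
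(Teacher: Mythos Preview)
Your proof is correct and follows essentially the same approach as the paper: expand $\MP{\cdot}^2$, pull out the scalar $s(\sigma)^2$, replace each per-coordinate expectation by $\E_{z\sim\mathcal{N}(0,\sigma)}[\operatorname{silu}^2(z)]$, and cancel against the definition of $s(\sigma)$. Your additional remarks about the lack of a closed form for $s(\sigma)$ and the fact that only marginal (not joint) Gaussianity is needed are sound observations that go slightly beyond what the paper states explicitly.
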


\begin{proof}
    Define $\vec{x}$, $s: \R \to \R$, and $\operatorname{silu}: \R \to \R$ as in \Cref{lem:silu}. Then, 
    \begin{align}
        \MP{s(\sigma) \cdot \operatorname{silu}(x)}^2
        &= \frac{1}{n} \sum_{i=1}^{n} \E\lft[\lft(s(\sigma) \cdot \operatorname{silu}(x_i)\rgt)^2 \rgt]\\
        &= s^2(\sigma) \cdot \E_{z \sim \mathcal{N}(0,\sigma)}\lft[ \operatorname{silu}(z)^2 \rgt]\\
        &= \sigma^2.
    \end{align}
    Hence, the scaled activation function preserves its input's magnitude.
\end{proof}

In order to ease computation, we make the assumption that the input is sampled from $\mathcal{N}(\vec{0}, \mat{I}_n)$. In this case, the scaling factor $s(1)$ can be computed numerically and yields
\begin{equation}
    s(1) \approx \frac{1}{0.596}.
\end{equation}
Thus, the magnitude-preserving \ac{silu} activation function is evaluated as $\nicefrac{\operatorname{silu}(\cdot)}{0.596}$.

\subsection{Leaky Rectified Linear Unit (ReLU)}

\begin{lemma} \label{lem:lrelu}
    The Leaky ReLU activation function,
    \begin{equation}
        \operatorname{lrelu}_{\alpha}(x) \defeq \begin{cases}
            x & x \geq 0 \\
            \alpha x & x < 0,
        \end{cases}
    \end{equation}
    can preserve the input magnitude through appropriate scaling. Specifically, let $\alpha \in \R^+$ be the negative slope parameter, and let $\vec{x} \sim \mathcal{N}(\vec{0}, \sigma^2 \mat{I}_n)$. Then,
    \begin{equation}
        \MP{\lft( \sqrt{\frac{2}{\alpha^2 + 1}} \rgt) \operatorname{lrelu}_\alpha(\vec{x})} = \MP{\vec{x}}.
    \end{equation}
\end{lemma}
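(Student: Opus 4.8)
The plan is to reduce the vector statement to a one-dimensional Gaussian second-moment computation, in the same spirit as the proof of \Cref{lem:silu}. First I would expand the definition of the expected magnitude and exploit that the coordinates $x_i$ are i.i.d.\ $\mathcal{N}(0,\sigma^2)$. Writing $c \defeq \sqrt{\nicefrac{2}{\alpha^2+1}}$ for the scaling constant, this gives
\begin{equation}
    \MP{c \cdot \operatorname{lrelu}_\alpha(\vec{x})}^2 = \frac{1}{n}\sum_{i=1}^n \E\lft[c^2 \operatorname{lrelu}_\alpha(x_i)^2\rgt] = c^2\, \E_{z\sim\mathcal{N}(0,\sigma^2)}\lft[\operatorname{lrelu}_\alpha(z)^2\rgt],
\end{equation}
so the whole claim reduces to evaluating the scalar quantity $\E[\operatorname{lrelu}_\alpha(z)^2]$.

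Next I would split this expectation according to the sign of $z$, using the piecewise definition of $\operatorname{lrelu}_\alpha$:
\begin{equation}
    \E\lft[\operatorname{lrelu}_\alpha(z)^2\rgt] = \E\lft[z^2 \ind{z\geq 0}\rgt] + \alpha^2\, \E\lft[z^2 \ind{z<0}\rgt].
\end{equation}
The key observation is that the density of $z \sim \mathcal{N}(0,\sigma^2)$ is symmetric about the origin, so each truncated second moment is exactly half of the full one: $\E[z^2 \ind{z\geq 0}] = \E[z^2 \ind{z<0}] = \tfrac12 \E[z^2] = \tfrac{\sigma^2}{2}$. Hence $\E[\operatorname{lrelu}_\alpha(z)^2] = \tfrac{\sigma^2}{2}(1+\alpha^2)$. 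Substituting back and using $c^2 = \nicefrac{2}{(\alpha^2+1)}$ yields $\MP{c \cdot \operatorname{lrelu}_\alpha(\vec{x})}^2 = \tfrac{2}{\alpha^2+1}\cdot\tfrac{\sigma^2}{2}(1+\alpha^2) = \sigma^2$; since $\MP{\vec{x}} = \sigma$, taking square roots finishes the argument.

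I do not expect a genuine obstacle here — the computation is routine. The only point needing a line of justification is the symmetric-truncation identity for the Gaussian second moment, which follows immediately from the evenness of the centered normal density. One worthwhile remark to include is that, in contrast to the magnitude-preserving \ac{silu} of \Cref{lem:silu}, the scaling constant $\sqrt{\nicefrac{2}{\alpha^2+1}}$ does not depend on $\sigma$, so this lemma extends verbatim to arbitrary input magnitudes, which is precisely why it is offered as a candidate replacement activation in \Cref{sec:discussion}.
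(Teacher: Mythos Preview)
Your proof is correct and follows essentially the same route as the paper: expand the expected magnitude, reduce to a scalar Gaussian second moment, split by sign, and exploit symmetry. The only difference is cosmetic --- the paper evaluates the half-line integral $\int_0^\infty x^2 e^{-x^2/(2\sigma^2)}\,dx$ explicitly via the gamma function, whereas you shortcut this by observing directly that each truncated second moment equals $\tfrac12\E[z^2]=\tfrac{\sigma^2}{2}$; your closing remark about the $\sigma$-independence of the scaling constant is a nice addition.
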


\begin{proof}
    Define $\vec{x}$ and $\alpha$ as in \Cref{lem:lrelu}. Then,
    \begin{align}
        \MP{\operatorname{lrelu}(\vec{x})}^2
        &= \frac{1}{n} \sum_{i=1}^{n} \E_{x_i \sim \mathcal{N}(0, \sigma^2)} \lft[ \operatorname{lrelu}_\alpha(x_i)^2 \rgt] \\
        &= \E_{x \sim \mathcal{N}(0, \sigma^2)} \lft[ \operatorname{lrelu}_\alpha(x)^2 \rgt] \\
        &= \int_{-\infty}^\infty  \frac{1}{\sqrt{2 \pi \sigma^2}} \cdot \exp\lft({-\frac{1}{2\sigma^2} x^2}\rgt) \cdot \operatorname{lrelu}_\alpha(x)^2 dx\\
        &= \frac{1}{\sqrt{2 \pi \sigma^2}} \lft( \int_{-\infty}^0 \exp\lft({-\frac{1}{2\sigma^2} x^2}\rgt) \alpha^2 x^2 dx + \int_0^\infty \exp\lft({-\frac{1}{2\sigma^2} x^2}\rgt)x^2 dx\rgt) \\
        &= \frac{1}{\sqrt{2 \pi \sigma^2}} \lft( \alpha^2 \int_0^\infty \exp\lft({-\frac{1}{2\sigma^2} x^2}\rgt) x^2 dx + \int_0^\infty \exp\lft({-\frac{1}{2\sigma^2} x^2}\rgt) x^2 dx\rgt) \\
        &= \frac{1}{\sqrt{2 \pi \sigma^2}} \cdot \lft(\alpha^2 + 1\rgt) \int_0^\infty \exp\lft({-\frac{1}{2\sigma^2} x^2}\rgt) \cdot  x^2 dx  \label{eq:gamma}\\
        &= \frac{1}{\sqrt{2 \pi \sigma^2}} \cdot (\alpha^2 + 1) \cdot \frac{\sqrt{\pi}}{4 \cdot \lft(\frac{1}{2 \sigma^2}\rgt)^{\nicefrac{3}{2}}} \\
        &= \lft( \frac{\alpha^2 + 1}{2} \rgt) \sigma^2,
    \end{align}
    where in \Cref{eq:gamma} we use the fact that the integral is a scaled gamma function.
\end{proof}

As a special case, setting $\alpha = 0$ yields the standard ReLU function. In this case, the following corollary holds:
\begin{corollary}\label{col:relu}
    The ReLU activation function preserves the input magnitude under appropriate scaling. Specifically, if $\vec{x} \sim \mathcal{N}(\vec{0}, \sigma^2 \mat{I}_n)$, then
    \begin{align*}
        \MP{\sqrt{2} \cdot \operatorname{relu}(\vec{x})} = \MP{\vec{x}}.
    \end{align*}
\end{corollary}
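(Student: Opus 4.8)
The plan is to derive this statement as an immediate specialization of \Cref{lem:lrelu}. First I would note that the standard ReLU is exactly the Leaky ReLU with negative slope $\alpha = 0$: indeed $\operatorname{lrelu}_0(x) = x$ for $x \geq 0$ and $\operatorname{lrelu}_0(x) = 0 \cdot x = 0$ for $x < 0$, which is precisely $\operatorname{relu}(x)$. Next I would substitute $\alpha = 0$ into the magnitude-preserving scale $\sqrt{\nicefrac{2}{\alpha^2 + 1}}$ supplied by \Cref{lem:lrelu}; this collapses to $\sqrt{2}$. Invoking the conclusion of \Cref{lem:lrelu} with these two substitutions then directly yields $\MP{\sqrt{2}\cdot\operatorname{relu}(\vec{x})} = \MP{\vec{x}}$ for any $\vec{x} \sim \mathcal{N}(\vec{0}, \sigma^2\mat{I}_n)$.

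If instead a self-contained argument is preferred, I would replay the computation from the proof of \Cref{lem:lrelu} with $\alpha = 0$. By coordinate-wise independence and identical marginals, $\MP{\operatorname{relu}(\vec{x})}^2 = \E_{x\sim\mathcal{N}(0,\sigma^2)}[\operatorname{relu}(x)^2]$. Since $\operatorname{relu}(x)^2$ equals $x^2$ on $[0,\infty)$ and vanishes on $(-\infty,0)$, this reduces to $\frac{1}{\sqrt{2\pi\sigma^2}}\int_0^\infty \exp(-\nicefrac{x^2}{(2\sigma^2)})\,x^2\,\dd x$, the same scaled Gamma integral that appears at \Cref{eq:gamma}, evaluating to $\nicefrac{\sigma^2}{2}$. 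Multiplying by $(\sqrt{2})^2 = 2$ restores $\sigma^2 = \MP{\vec{x}}^2$, and taking square roots completes the argument.

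Since the claim follows by a one-line substitution into an already-proved lemma, I do not anticipate any real obstacle. The only point worth a remark is the behavior at $x = 0$, but this is a single point of Lebesgue measure zero under the Gaussian law, so it does not affect the expectation and causes no issue.
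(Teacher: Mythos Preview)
Your proposal is correct and matches the paper's approach exactly: the paper states the corollary immediately after \Cref{lem:lrelu} as the special case $\alpha = 0$, with no separate proof given, so your one-line specialization is precisely what is intended. The optional self-contained computation you outline is also fine and simply replays the lemma's proof with $\alpha = 0$.
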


\subsection{Attention}

\begin{proof}[Proof of \Cref{lem:att}]
    Define $\mat{A} \in \R^{T \times T}$ and $\mat{V} \in \R^{T \times n}$ as in \Cref{lem:att}. We need to show that the attention mechanism preserves the magnitude of the values. Attention first normalizes the attention map,
    \begin{equation}
        \mat{B} = \operatorname{softmax}_{\beta}(\mat{A}),
    \end{equation}
    where $\operatorname{softmax}$ is defined as in \Cref{lem:att}. Now, the rows of $\mat{B}$ are normalized to be distributions over values, \ie, $\vec{b}_i \in \Delta^{T-1}$, where $\Delta^{m-1}$ is the $m$-dimensional probability simplex.
    
    Here, the rows are normalized to proper distributions over values. The output is computed by
    \begin{equation}
        \operatorname{att}(\mat{A},\mat{V}) = \mat{B}\mat{V}, \quad \operatorname{att}(\mat{A},\mat{V})_i = \sum_{t=1}^T b_{it} \vec{v}_t.
    \end{equation}
    We can now show that the output magnitude is upper bounded by the input magnitude $\sigma$,
    \begin{align}
        \MP{\operatorname{att}(\mat{A},\mat{V})_i}^2
        & = \frac{1}{n} \sum_{j=1}^{d} \E \lft[ \operatorname{att}(\mat{A},\mat{V})_{ij}^2 \rgt] \\
        & = \frac{1}{n} \sum_{j=1}^{d} \E \lft[ \lft( \sum_{t=1}^T b_{it} v_{tj} \rgt)^2 \rgt] \\
        & \leq \frac{1}{n} \sum_{j=1}^{n} \E \lft[ \sum_{t=1}^T b_{it} v_{tj}^2 \rgt] \label{eq:jensens_inequality} \\
        & = \sum_{t=1}^T b_{it} \cdot \frac{1}{n} \sum_{j=1}^{n} \E \lft[ v_{tj}^2 \rgt] \\
        & = \sum_{t=1}^T b_{it} \sigma^2 \\
        & = \sigma^2,
    \end{align}
    where Jensen's inequality is applied in \Cref{eq:jensens_inequality}, because $f(x) = x^2$ is a convex function and $\vec{b}_i \in \Delta^{T-1}$. If $b_{it} = 1$ for some $t \in [T]$ (and hence the other timesteps have weight $0$), it is trivial to see that the inequality becomes equality.
\end{proof}

\subsection{Rotation Modulation}

\begin{proof}[Proof of \Cref{lem:rot_mod}]
    Define $\mat{R}$ and $\vec{x}$ as in \Cref{lem:rot_mod}. Then,
    \begin{align}
        \MP{\mat{R}\vec{x}}^2
        &= \frac{1}{n} \sum_{i=1}^n \E \lft[ \lft(\mat{R}\vec{x} \rgt)^2_i \rgt] \\
        &= \frac{1}{n} \E \lft[ \sum_{i=1}^n \lft(\mat{R}\vec{x} \rgt)^2_i \rgt] \\
        &= \frac{1}{n} \E \lft[ \| \mat{R}\vec{x} \|_2^2 \rgt] \\
        &= \frac{1}{n} \E \lft[ \trp{\vec{x}} \trp{\mat{R}} \mat{R} \vec{x} \rgt] \\
        &= \frac{1}{n} \E \lft[ \trp{\vec{x}} \vec{x} \rgt] \\
        &= \frac{1}{n} \E \lft[ \sum_{i=1}^n x_i^2 \rgt] \\
        &= \frac{1}{n} \sum_{i=1}^n \E \lft[ x_i^2 \rgt] \\
        &= \MP{\vec{x}}.
    \end{align}
    This concludes the proof.
\end{proof}

\section{Configurations and Reproducibility}\label{sec:configurations}
A complete list of training hyperparameters is included in \Cref{tab:hyperparameters}, and detailed configurations for each setup in the magnitude preservation ablation study are summarized in \Cref{tab:dit_attributes}. Estimated training run-times for all evaluated models are reported in \Cref{tab:runtime}. For the conditioning ablation study, all corresponding hyperparameters and results are documented in \Cref{tab:mod-full-results}.

\noindent\begin{minipage}[t]{0.475\textwidth}
\centering
\captionof{table}{Configuration Attributes. A check (\cmark) indicates that a particular training attribute is enabled.}
\label{tab:dit_attributes}
\begin{tabular}{@{}lccccc@{}}
\toprule
\textbf{Attribute} & A & B & C & D & E \\
\midrule
Cosine attention         & \xmark & \cmark & \cmark & \cmark & \cmark \\
Weight norm     & \xmark & \xmark & \cmark & \cmark & \cmark \\
MP embedding             & \xmark & \xmark & \cmark & \cmark & \cmark \\
MP pos enc   & \xmark & \xmark & \cmark & \cmark & \cmark \\
MP residual              & \xmark & \xmark & \cmark & \cmark & \cmark \\
MP SiLU                  & \xmark & \xmark & \cmark & \cmark & \cmark \\
Forced weight norm       & \xmark & \xmark & \xmark & \cmark & \cmark \\
No layer norm            & \xmark & \xmark & \xmark & \xmark & \cmark \\
\bottomrule
\end{tabular}
\end{minipage}%
\hfill
\begin{minipage}[t]{0.475\textwidth}
\centering
\captionof{table}{Runtime Overview, where each model ran with batch size 256 under Config~E on a GTX~1080~Ti.}
\label{tab:runtime}
\begin{tabular}{@{}lS[table-format=3.1]S[table-format=2.1]S[table-format=2.1]@{}}
\toprule
\textbf{Model} & \multicolumn{1}{c}{Params (M)} & \multicolumn{1}{c}{Steps/sec} & \multicolumn{1}{c}{Time (h)} \\
\midrule
\DIT{B}{4}        & 131.4 & 2.13 & 52.2 \\
\DIT{B}{8}        & 131.9 & 4.11 & 27.0 \\
\DIT{S}{2}        & 33.2  & 2.11 & 52.7 \\
\textbf{\DIT{S}{4}}   & 33.3  & 5.65 & 19.7 \\
\DIT{S}{8}        & 33.5  & 7.91 & 14.0 \\
\textbf{\DIT{XS}{2}}  & 7.9   & 6.74 & 16.5 \\
\DIT{XS}{4}       & 7.9   & 11.6 & 9.6 \\
\DIT{XS}{8}       & 8.0   & 14.9 & 7.5 \\
\bottomrule
\end{tabular}
\end{minipage}

\begin{table*}[thb]
    \centering \small
    \renewcommand{\arraystretch}{1.05}
    \caption{Hyperparameters for ablation study. Each row summarizes training settings for \DIT{XS}{2} (top) and \DIT{S}{4} (bottom) across configurations A to E.}
    \par\medskip
    \begin{tabular}{@{}c<{\enspace}@{}lcccccccc@{}}
    \toprule
    & \textbf{Config} 
    & \shortstack{\#EMA\\Snapshots} 
    & \shortstack{EMA\\Stdevs} 
    & \shortstack{Warm-up\\Steps} 
    & \shortstack{LR Decay\\Start Step} 
    & \shortstack{\#Params\\(M)} 
    & \shortstack{Train\\Steps (K)} 
    & \shortstack{Batch\\Size} 
    & \shortstack{Learning\\Rate} \\ 
    \midrule
    \multirow{5}{*}{\rotatebox{90}{\textbf{\DIT{XS}{2}}}}
    & Config A & 250 & $\{0.05, 0.1\}$ & 2666 & 40000 & 7.60 & 400 & 256 & $1 \times 10^{-4}$ \\
    & Config B & 250 & $\{0.05, 0.1\}$ & 2666 & 40000 & 7.60 & 400 & 256 & $1 \times 10^{-4}$ \\
    & Config C & 250 & $\{0.05, 0.1\}$ & 2666 & 40000 & 7.86 & 400 & 256 & $1 \times 10^{-2}$ \\
    & Config D & 250 & $\{0.05, 0.1\}$ & 2666 & 40000 & 7.86 & 400 & 256 & $1 \times 10^{-2}$ \\
    & Config E & 250 & $\{0.05, 0.1\}$ & 2666 & 40000 & 7.86 & 400 & 256 & $1 \times 10^{-2}$ \\
    \midrule
    \multirow{5}{*}{\rotatebox{90}{\textbf{\DIT{S}{4}}}}
    & Config A & 250 & $\{0.05, 0.1\}$ & 2666 & 40000 & 32.89 & 400 & 256 & $1 \times 10^{-4}$ \\
    & Config B & 250 & $\{0.05, 0.1\}$ & 2666 & 40000 & 32.89 & 400 & 256 & $1 \times 10^{-4}$ \\
    & Config C & 250 & $\{0.05, 0.1\}$ & 2666 & 40000 & 33.28 & 400 & 256 & $1 \times 10^{-2}$ \\
    & Config D & 250 & $\{0.05, 0.1\}$ & 2666 & 40000 & 33.28 & 400 & 256 & $1 \times 10^{-2}$ \\
    & Config E & 250 & $\{0.05, 0.1\}$ & 2666 & 40000 & 33.28 & 400 & 256 & $1 \times 10^{-2}$ \\
    \bottomrule
    \end{tabular}
    \label{tab:hyperparameters}
\end{table*}

\begin{table*}[thb]
    \centering 
    \small
    \renewcommand{\arraystretch}{1.05}
    \caption{Hyperparameters and results for conditioning ablation study. Each row summarizes the training settings for \DIT{S}{2}. Across all runs, we used 250 EMA snapshots with standard deviations \{0.05, 0.1\}, 2666 warm-up steps, learning rate decay starting at 40K steps, and a total of 400K training steps. The batch size was set to 256, and the learning rate was fixed at $1 \times 10^{-4}$.}
    \par\medskip
    \begin{tabular}{
    @{}
    c<{\enspace}
    @{}
    c
    c
    c
    S[table-format=2.1]
    S[table-format=1.2]
    S[table-format=2.2]
    S[table-format=2.2]
    S[table-format=2.2]
    S[table-format=1.3]
    S[table-format=1.3]
    @{}}
    \toprule
    & \textbf{Scale}
    & \textbf{Shift}
    & \textbf{Rotate}
    & \textbf{Params (M)}
    & \textbf{Steps/s}
    & \textbf{FID $\downarrow$}
    & \textbf{sFID $\downarrow$}
    & \textbf{IS $\downarrow$}
    & \textbf{Precision $\uparrow$}
    & \textbf{Recall $\uparrow$} \\
    \midrule
    \multirow{7}{*}{\rotatebox{90}{\textbf{\DIT{S}{2}}}}
    & \cmark & \xmark & \xmark & 29.1 & 2.43 & 72.03 & 13.38 & 43.77 & 0.185 & 0.394 \\
    & \xmark & \cmark & \xmark & 25.6 & 2.62 & 85.23 & 73.87 & 11.12 & 0.145 & 0.387 \\
    & \xmark & \xmark & \cmark & 25.6 & 2.22 & 84.62 & 73.45 & 11.33 & 0.147 & 0.379 \\
    & \cmark & \cmark & \xmark & 32.8 & 2.34 & 69.28 & 47.17 & 14.21 & 0.186 & 0.426 \\
    & \cmark & \xmark & \cmark & 31.0 & 2.12 & 70.86 & 50.78 & 13.67 & 0.191 & 0.405 \\
    & \xmark & \cmark & \cmark & 27.4 & 2.21 & 74.01 & 57.60 & 12.82 & 0.177 & 0.397 \\
    & \cmark & \cmark & \cmark & 34.6 & 2.00 & 72.19 & 51.87 & 14.23 & 0.180 & 0.423 \\
    \bottomrule
    \end{tabular}
    \label{tab:mod-full-results}
\end{table*}

\section{Power-Function Based EMA}\label{sec:ema}

\Ac{ema} has proven to be highly effective in image generation \citep{song2020score}, but their performance is very sensitive to the decay parameter \citep{nichol_improved_2021}. Let $\var(t)$ denote the model parameters at step $t$, and $\emaG{\beta}(t)$ their \ac{ema}. The traditional \ac{ema} fixes a decaying parameter $\beta \in [0,1]$ and updates as
\begin{equation}
    \emaG{\beta}(t) = \beta \emaG{\beta}(t-1) + \lft(1 - \beta \rgt) \var(t).
\end{equation}
However, this places non-negligible weight on the random initialization of the first few training steps. Ideally, the decay rate should be small initially to reduce noise and grow larger to smooth the final parameters.

To address the aforementioned issues, \citet{karras_analyzing_2024} proposes using a power-function based \ac{ema}. We present our own derivation, closely aligned with its practical implementation and simpler to interpret,
\begin{equation}
    \ema(t) = \frac{1}{Z(t)} \sum_{\tau = 0}^t \tau^\gamma \var(\tau), \quad Z(t) = \sum_{\tau = 0}^{t} \tau^\gamma \label{eq:ema}
\end{equation}
where $Z(t)$ ensures the weights sum to $1$, and $\gamma$ is a hyperparameter that controls the overall decay, so that large $\gamma$ places more weight on recent steps. To make $\gamma$ more intuitive, \citet{karras_analyzing_2024} parametrize it via its relative standard deviation $\sigrel = \lft(\gamma + 1\rgt)^\frac{1}{2}\lft(\gamma + 2\rgt)^{-1} \lft( \gamma + 3 \rgt)^{-\frac{1}{2}}$, which gauges the ``width'' of the weighting distribution relative to the training time. Visualizations of different decay parameters are presented in \Cref{fig:ema_decay}.

In the following, we will prove closed-form update formula for the power function based \ac{ema}. For this, let $\gamma$ be fixed, and $\ema(t)$ to denote the \ac{ema} and $\var(t)$ the parameters at step $t$. Then the definition of the \ac{ema} is equivalent to the following,
\begin{align}
    \ema(t) &\defeq \frac{1}{Z(t)} \sum_{\tau=0}^t \tau^\gamma \var(\tau) \\
    &= \frac{Z(t-1)}{Z(t)}\frac{1}{Z(t-1)} \sum_{\tau=0}^{t-1} \tau^\gamma  \var(\tau) + \frac{t^\gamma}{Z(t)} \var(t)\\
    &= \frac{Z(t-1)}{Z(t)} \ema(t-1) + \frac{Z(t) - Z(t-1)}{Z(t)} \var(t)\\
    &= \frac{Z(t-1)}{Z(t)} \ema(t-1) + \lft(1 - \frac{Z(t-1)}{Z(t)} \rgt) \var(t) 
\end{align}
Unlike standard \ac{ema}, the decay factor $\nicefrac{Z(t-1)}{Z(t)}$ now depends on $t$, making it small early in training and close to to $1$ for large $t$.

However, calculating the decaying term directly can be numerically unstable, as the complexity scales linearly with the step $t$, as its just a sum over powers
\begin{equation}
    Z(t) = \sum_{\tau = 0}^t \tau^\gamma.
\end{equation}
To mitigate this, we approximate each sum via a continuous integral, which consequently can be represented in closed form,
\begin{align}
  \frac{Z(t-1)}{Z(t)} 
  &\approx \frac{\int_{0}^{t-1} \tau^\gamma \mathrm{d}\tau}{\int_{0}^{t} \tau^\gamma \mathrm{d}\tau}
  = \frac{\frac{1}{\gamma + 1}(t-1)^{\gamma + 1}}{\frac{1}{\gamma + 1} t^{\gamma + 1}}
  = \lft(1 - \frac{1}{t} \rgt)^{\gamma + 1}.
\end{align}
This derivation follows the same ideas as \citet{karras_analyzing_2024}, but is presented here in a form closer to our actual implementations, and allows for easier understanding. The resulting update formula matches the one in \citet{karras_analyzing_2024} exactly.

\paragraph{Terminology}

While this approach shares similarities with traditional \ac{ema}, it does not follow a strictly ``exponential'' decay due to its dependence on the power function. Therefore, referring to it as an \ac{ema} can be misleading. However, for consistency with prior work and to emphasize its utility in improving training dynamics, we retain the term \ac{ema} to avoid unnecessary confusion.

\begin{wrapfigure}{r}{0.6\linewidth}
    \centering
    \begin{tikzpicture}
    \begin{axis}[
        domain=1:1000,    
        samples=200,    
        xmin=-0.1,
        xmax=1000,
        ymin=0,
        ymax=1.1,
        width=\linewidth,
        height=0.65\linewidth,
        legend style={legend columns=3},
        legend style={at={(0.5,-0.2)}, anchor=north},
        grid={major},
    ]
    
    \pgfmathdeclarefunction{calcBeta}{2}{%
        \pgfmathparse{(1 - 1 / (#1))^(#2))}%
    }

    \addplot[C0, thick] {calcBeta(x, 96.99)};
    \addlegendentry{$\sigrel = 0.01$}

    \addplot[C1, thick] {calcBeta(x, 16.97)};
    \addlegendentry{$\sigrel = 0.05$}

    \addplot[C2, thick] {calcBeta(x, 6.93)};
    \addlegendentry{$\sigrel = 0.10$}

    \addplot[C3, thick] {calcBeta(x, 3.55)};
    \addlegendentry{$\sigrel = 0.15$}

    \addplot[C4, thick] {calcBeta(x, 1.82)};
    \addlegendentry{$\sigrel = 0.20$}
    
    \addplot[C5, thick] {calcBeta(x, 0.71)};
    \addlegendentry{$\sigrel = 0.25$}
  
\end{axis}
\end{tikzpicture}
    \caption{\Ac{ema} Decaying Factor Curves. Decaying factor $\nicefrac{Z(t-1)}{Z(t)}$ ($y$-axis) over the first 1000 steps ($x$-axis) for various relative standard deviations.}
    \label{fig:ema_decay}
    \vspace{-5mm}
\end{wrapfigure}

\paragraph{Post hoc EMA}
The choice of decay parameter strongly affects performance \citep{nichol_improved_2021}, and the same holds for our hyperparameter $\gamma$. Ideally, one could choose $\gamma$ (or equivalently $\sigrel$) \emph{after} training.
\citet{karras_analyzing_2024} show that it is possible to reconstruct different \ac{ema} profiles accurately from a single training run, by storing a small set of model snapshots under varying $\gamma$. Specifically, we store two \ac{ema} snapshots for $\gamma_1 = 16.97$ and $\gamma_2 = 6.94$ (corresponding to $\sigrel = 0.05$ and $0.1$, respectively) every \num{1600} training steps. After processing \num{400}k images. To save storage, we store each snapshot in 16-bit floating point precision. 

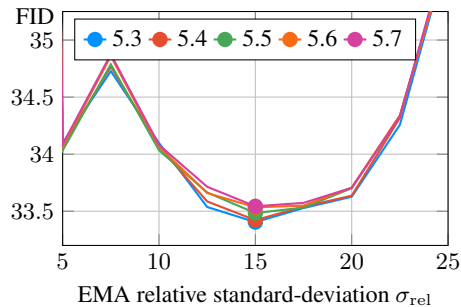
\begin{wrapfigure}{r}{0.44\linewidth}
    \vspace{-5mm}
    \begin{tikzpicture}
        \footnotesize
        \begin{axis}[
                width=\linewidth,
                height=0.7\linewidth,
                xmin=5, xmax=25,
                ymin=33.2, ymax=35.25,
                xlabel={\acs{ema} relative standard-deviation $\sigrel$},
                ylabel={\acs{fid}},
                ylabel style={
                        at={(axis description cs:0,0.98)},
                        anchor=east,
                        rotate=270,
                    },
                grid={major},
                legend pos={north west},
                legend style={legend columns=-1}
            ]
            \AddMinimumPlot[C0]{5.3}{e_xs_2-guidance5.3---ema-fid10k}
            \AddMinimumPlot[C1]{5.4}{e_xs_2-guidance5.4---ema-fid10k}
            \AddMinimumPlot[C2]{5.5}{e_xs_2-guidance5.5---ema-fid10k}
            \AddMinimumPlot[C3]{5.6}{e_xs_2-guidance5.6---ema-fid10k}
            \AddMinimumPlot[C4]{5.7}{e_xs_2-guidance5.7---ema-fid10k}
        \end{axis}
    \end{tikzpicture}
    \caption{FID-10K across varying $\sigrel$ values. Experiments were conducted using \DIT{XS}{2} using Config~E.}
    \label{fig:ema_reconstruct}
\end{wrapfigure}

For an arbitrary $\gamma^*$ value, we can approximate $\emaG{\gamma^*}$ after training by computing the least-squares fit between the two stored \ac{ema} profiles and the desired $\gamma^*$, then combine the snapshots accordingly.

\paragraph{Results}
The ability to reconstruct different \ac{ema} profiles post-training adds flexibility to this approach. In \Cref{fig:ema_reconstruct}, we calculated FID-10K scores for \DIT{XS}{2} in Config~E using various reconstructed \ac{ema} profiles. The optimal FID-10K score was achieved with $\sigrel = \SI{15}{\percent}$, improving the score by $\approx 0.64$ for all tested guidance scales. Interestingly, the optimal $\sigrel$ was not saved during training.

\section{Activation Magnitude Evolution}\label{sec:mag_evol}
To assess the impact of our magnitude-preserving techniques, we visualize the evolution of activation magnitudes across the DiT blocks of DiT-S/4 in \Cref{fig:mag_evol}.

At initialization, Config~A maintains a constant magnitude across blocks. However, after 400k training steps, it exhibits a clear trend of increasing magnitudes at the outputs of both the self-attention and MLP modules. Despite this accumulation, LayerNorm ensures that the inputs to these modules remain consistently near zero magnitude, stabilizing the model. 

In contrast, Config~E, which applies all proposed magnitude-preserving techniques and omits LayerNorm, shows a steady decline in magnitude at initialization. This behavior aligns with our theoretical result in \Cref{lem:att}, which shows that attention layers inherently reduce magnitude. After training, Config~E decrease magnitudes steadily in all blocks and avoids the excessive growth as seen in Config~A, suggesting that our techniques are effective even without normalization.

We also include Config~D, which uses the same techniques as Config~E but retains LayerNorm. At initialization, the only difference is that inputs are normalized to unit magnitude by LayerNorm. After training, its magnitude evolution closely resembles that of Config~E, with the main distinction being a spike in the MLP input due to LayerNorm. Given the similarity in both behavior and performance (see \Cref{sec:res}), we conclude that LayerNorm may not be necessary when designing DiT in a magnitude-preserving manner.

\newcommand{\MagnitudePlot}[2]{
\begin{tikzpicture}
  \footnotesize
  \pgfplotstableread[col sep=comma]{#1}\datatable
  
  \begin{axis}[
    scale only axis,
    width=0.8\textwidth,
    height=0.4\textwidth,
    scaled ticks=false,
    xlabel={DiT Blocks}, ylabel={$\MP{\vec{x}}$},
    ylabel style={
        at={(axis description cs:0,0.98)},
        anchor=east,
        rotate=270,
    },
    xtick=data,
    xticklabel style={/pgf/number format/1000 sep=},
    xmin=1, xmax=12,
    ymin=0, ymax=#2,
    grid=major,
    legend to name=mag_legend,
    legend columns=4,
    legend image post style={line width=1pt},
  ] 
    \addplot[thick, C0]
      table[x=Block,y=MSAavg]{\datatable};
    \addlegendentry{Modulated MSA Input}

    \addplot[name path=msaUp,   draw=none, forget plot]
      table[x=Block,y=MSAup]{\datatable};
    \addplot[name path=msaLow,  draw=none, forget plot]
      table[x=Block,y=MSAlow]{\datatable};
    \addplot[fill=C0, fill opacity=0.3, draw=none, forget plot]
      fill between[of=msaUp and msaLow];

    \addplot[thick, C2]
      table[x=Block,y=OUTavg]{\datatable};
    \addlegendentry{MSA Output}

    \addplot[name path=outUp,  draw=none, forget plot]
      table[x=Block,y=OUTup]{\datatable};
    \addplot[name path=outLow, draw=none, forget plot]
      table[x=Block,y=OUTlow]{\datatable};
    \addplot[fill=C2, fill opacity=0.3, draw=none, forget plot]
      fill between[of=outUp and outLow];

    \addplot[thick, C1]
      table[x=Block,y=MLPinavg]{\datatable};
    \addlegendentry{Modulated MLP Input}

    \addplot[name path=mlpInUp,  draw=none, forget plot]
      table[x=Block,y=MLPinup]{\datatable};
    \addplot[name path=mlpInLow, draw=none, forget plot]
      table[x=Block,y=MLPinlow]{\datatable};
    \addplot[fill=C1, fill opacity=0.3, draw=none, forget plot]
      fill between[of=mlpInUp and mlpInLow];

    \addplot[thick, C4]
      table[x=Block,y=MLPoutavg]{\datatable};
    \addlegendentry{MLP Output}

    \addplot[name path=mlpOutUp,  draw=none, forget plot]
      table[x=Block,y=MLPoutup]{\datatable};
    \addplot[name path=mlpOutLow, draw=none, forget plot]
      table[x=Block,y=MLPoutlow]{\datatable};
    \addplot[fill=C4, fill opacity=0.3, draw=none, forget plot]
      fill between[of=mlpOutUp and mlpOutLow];
  \end{axis}
\end{tikzpicture}
}

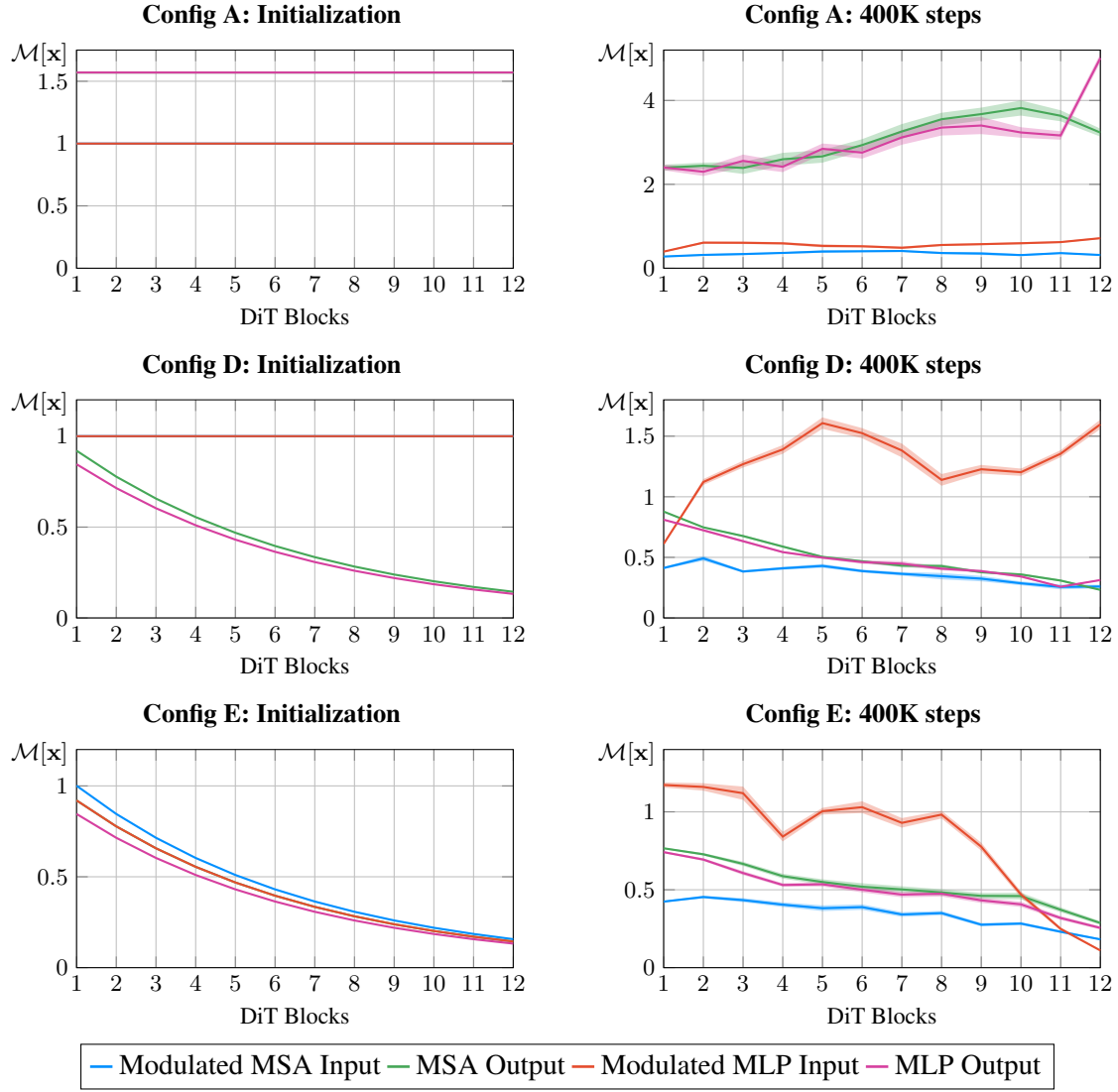
\begin{figure}[t]
    \centering
    \begin{subfigure}[t]{0.48\textwidth}
        \centering
        \captionsetup{labelformat=empty}
        \caption{\textbf{Config A: Initialization}}
        \MagnitudePlot{data/mag_A_init.csv}{1.75}
    \end{subfigure}
    \hfill
    \begin{subfigure}[t]{0.48\textwidth}
        \captionsetup{labelformat=empty}
        \caption{\textbf{Config A: 400K steps}}
        \MagnitudePlot{data/mag_A_400.csv}{5.2}
    \end{subfigure}

    \begin{subfigure}[t]{0.48\textwidth}
        \centering
        \captionsetup{labelformat=empty}
        \caption{\textbf{Config D: Initialization}}
        \MagnitudePlot{data/mag_D_init.csv}{1.2}
    \end{subfigure}
    \hfill
    \begin{subfigure}[t]{0.48\textwidth}
        \captionsetup{labelformat=empty}
        \caption{\textbf{Config D: 400K steps}}
        \MagnitudePlot{data/mag_D_400.csv}{1.8}
    \end{subfigure}

    \begin{subfigure}[t]{0.48\textwidth}
        \centering
        \captionsetup{labelformat=empty}
        \caption{\textbf{Config E: Initialization}}
        \MagnitudePlot{data/mag_E_init.csv}{1.2}
    \end{subfigure}
    \hfill
    \begin{subfigure}[t]{0.48\textwidth}
        \captionsetup{labelformat=empty}
        \caption{\textbf{Config E: 400K steps}}
        \MagnitudePlot{data/mag_E_400.csv}{1.4}
    \end{subfigure}
    \begin{subfigure}[t]{0.95\textwidth}
        \centering
        \begin{tikzpicture}
            \node at (0,0) {\pgfplotslegendfromname{mag_legend}};
        \end{tikzpicture}
    \end{subfigure}

    \caption{Activation magnitude evolution across DiT blocks in DiT-S/4. For blocks 1–12, we show mean activation magnitudes (averaged over all labels and timesteps), with shaded areas representing $\pm3$ standard deviations. Plotted are the AdaLN-modulated input (see \Cref{sec:modulation}) and the output after the residual connection, for both the self-attention and MLP modules. Results are shown at initialization (left) and after 400K training steps (right) for each configuration.}
    \label{fig:mag_evol}
\end{figure}

\section{Convergence}\label{sec:convergence}
To demonstrate the training process and verify that the choice of \num{400}K training steps was not arbitrary, we evaluated several metrics for Configurations A to E on both \DIT{XS}{2} and \DIT{S}{4}. Specifically, we used the training checkpoints saved without \ac{ema} at intervals of every \num{50}k steps. For each model and configuration combination, the metrics were calculated both in unguided mode and with guidance using a guidance scale of \num{5.0}. The metrics include
\begin{enumerate}[leftmargin=8em]
    \item[Inception Score] Measures the quality and diversity of generated images based on the output probabilities of a pretrained classifier. Higher scores indicate better generative performance \citep{salimans_improved_2016}.
    \item[Precision/Recall] Quantify the alignment between generated and real data distribution. Precision measures how much of the generated data falls within the real data manifold, while recall assesses how much of the real data is covered by the generated data.
    \item[\Acs{fid}-10K] The \acf{fid} computes the similarity between the real and generated data distributions using the mean and covariance of their feature representations. Lower values indicate higher similarity. \Acs{fid}-10K calculates this score over 10,000 samples \citep{heusel_gans_2018}.
    \item[\acs{sfid}-10K] \ac{sfid} is a variant of FID that reduces computational complexity by using a single set of real features for comparison. Like FID, lower values indicate better alignment between real and generated data distributions \citep{nash_generating_2021}.
\end{enumerate}

Plots showing the progression of these metrics over the training span are provided. For \DIT{XS}{2}, unguided and guided results can be found in \Cref{fig:convergence-xs}. Similarly, the results for \DIT{S}{4} are presented in \Cref{fig:convergence-s}.

\NewDocumentCommand\AddPlot{O{black}mm}{%
    \addplot [thick, #1] coordinates { \rawdata{#3} };
    \ifblank{#2}{}{\addlegendentry{#2};}
}

\NewDocumentCommand\MaxConvergence{mmm}{%
    \begin{subfigure}[t]{0.45\linewidth}
        \captionsetup{labelformat=empty}
        \caption{\textbf{#3}}
        \begin{tikzpicture}
            \footnotesize
            \begin{axis}[
                    scale only axis,
                    width=0.9\textwidth,
                    height=0.4\linewidth,
                    scaled ticks=false,
                    xtick={50000, 100000, 150000, 200000, 250000, 300000, 350000, 400000},
                    xticklabels={50k, 100k, 150k, 200k, 250k, 300k, 350k, 400k},
                    xmin=50000, xmax=400000,
                    xlabel={Training steps},
                    ylabel={#2},
                    ylabel style={
                            at={(axis description cs:0,0.98)},
                            anchor=east,
                            rotate=270,
                        },
                    grid={major},
                    legend pos={north west},
                    legend style={legend columns=-1}
                ]
                \AddPlot[C0]{A}{a_#1}
                \AddPlot[C1]{B}{b_#1}
                \AddPlot[C2]{C}{c_#1}
                \AddPlot[C3]{D}{d_#1}
                \AddPlot[C4]{E}{e_#1}
            \end{axis}
        \end{tikzpicture}   
    \end{subfigure}%
}

\NewDocumentCommand\PRConvergence{mmm}{%
    \begin{subfigure}[t]{0.45\linewidth}
        \captionsetup{labelformat=empty}
        \caption{\textbf{#3}}
        \begin{tikzpicture}
            \footnotesize
            \begin{axis}[
                    scale only axis,
                    width=0.9\textwidth,
                    height=0.4\linewidth,
                    scaled ticks=false,
                    xtick={50000, 100000, 150000, 200000, 250000, 300000, 350000, 400000},
                    xticklabels={50k, 100k, 150k, 200k, 250k, 300k, 350k, 400k},
                    xmin=50000, xmax=400000,
                    xlabel={Training steps},
                    ylabel={#2},
                    ylabel style={
                            at={(axis description cs:0,0.98)},
                            anchor=east,
                            rotate=270,
                        },
                    grid={major},
                    legend pos={north west},
                    legend style={legend columns=-1}
                ]
                \AddPlot[C0]{A}{a_#1-precision}
                \AddPlot[C1]{B}{b_#1-precision}
                \AddPlot[C2]{C}{c_#1-precision}
                \AddPlot[C3]{D}{d_#1-precision}
                \AddPlot[C4]{E}{e_#1-precision}
                \AddPlot[C0,densely dashed]{}{a_#1-recall}
                \AddPlot[C1,densely dashed]{}{b_#1-recall}
                \AddPlot[C2,densely dashed]{}{c_#1-recall}
                \AddPlot[C3,densely dashed]{}{d_#1-recall}
                \AddPlot[C4,densely dashed]{}{e_#1-recall}
            \end{axis}
        \end{tikzpicture}
    \end{subfigure}%
}

\NewDocumentCommand\MinConvergence{oommm}{%
    \begin{subfigure}[t]{0.45\linewidth}
        \captionsetup{labelformat=empty}
        \caption{\textbf{#5}}
        \begin{tikzpicture}
            \footnotesize
            \begin{axis}[
                    scale only axis,
                    width=0.9\textwidth,
                    height=0.4\textwidth,
                    scaled ticks=false,
                    xtick={50000, 100000, 150000, 200000, 250000, 300000, 350000, 400000},
                    xticklabels={50k, 100k, 150k, 200k, 250k, 300k, 350k, 400k},
                    xmin=50000, xmax=400000,
                    \IfValueTF{#1}{ymin=#1,}{}
                    \IfValueTF{#2}{ymax=#2,}{}
                    xlabel={Training steps},
                    ylabel={#4},
                    ylabel style={
                            at={(axis description cs:0,0.98)},
                            anchor=east,
                            rotate=270,
                        },
                    grid={major},
                    legend pos={north east},
                    legend style={legend columns=-1}
                ]
                \AddPlot[C0]{A}{a_#3}
                \AddPlot[C1]{B}{b_#3}
                \AddPlot[C2]{C}{c_#3}
                \AddPlot[C3]{D}{d_#3}
                \AddPlot[C4]{E}{e_#3}
            \end{axis}
        \end{tikzpicture}
    \end{subfigure}%
}

\begin{figure*}
    \centering
    \MaxConvergence{xs_2-noguid---steps-is}{}{Inception score $\uparrow$} \hfill
    \PRConvergence{xs_2-noguid---steps}{}{Precision $\uparrow$ (solid) and recall $\uparrow$ (dashed)}
    \par\medskip
    \MinConvergence{xs_2-noguid---steps-fid10k}{}{FID-10K $\downarrow$} \hfill
    \MinConvergence{xs_2-noguid---steps-sfid10k}{}{sFID-10K $\downarrow$}
    \par\medskip
    \MaxConvergence{xs_2-guid---steps-is}{}{Inception score $\uparrow$} \hfill
    \PRConvergence{xs_2-guid---steps}{}{Precision $\uparrow$ (solid) and recall $\uparrow$ (dashed)}
    \par\medskip
    \MinConvergence{xs_2-guid---steps-fid10k}{}{FID-10K $\downarrow$} \hfill
    \MinConvergence{xs_2-guid---steps-sfid10k}{}{sFID-10K $\downarrow$}
    \caption{\textbf{Convergence trends of generative metrics for \DIT{XS}{2} without guidance (top) and with guidance scale \num{5.0} (bottom).} Metrics including the Inception Score, Precision, Recall, FID-10k, and sFID-10k are calculated for all configurations using a guidance scale of \num{5.0}. These evaluations are performed on model checkpoints taken every 50k training steps.}
    \label{fig:convergence-xs}
\end{figure*}

\begin{figure*}
    \centering
    \MaxConvergence{s_4-noguid---steps-is}{}{Inception score $\uparrow$} \hfill
    \PRConvergence{s_4-noguid---steps}{}{Precision $\uparrow$ (solid) and recall $\uparrow$ (dashed)}
    \par\medskip
    \MinConvergence{s_4-noguid---steps-fid10k}{}{FID-10K $\downarrow$} \hfill
    \MinConvergence{s_4-noguid---steps-sfid10k}{}{sFID-10K $\downarrow$}
    \par\medskip
    \MaxConvergence{s_4-guid---steps-is}{}{Inception score $\uparrow$} \hfill
    \PRConvergence{s_4-guid---steps}{}{Precision $\uparrow$ (solid) and recall $\uparrow$ (dashed)}
    \par\medskip
    \MinConvergence{s_4-guid---steps-fid10k}{}{FID-10K $\downarrow$} \hfill
    \MinConvergence{s_4-guid---steps-sfid10k}{}{sFID-10K $\downarrow$}
    \caption{Convergence trends of generative metrics for \DIT{S}{4} without guidance (top) and with guidance scale \num{5.0} (bottom). Metrics including the Inception Score, Precision, Recall, FID-10k, and sFID-10k are calculated for all configurations using a guidance scale of \num{5.0}. These evaluations are performed on model checkpoints taken every 50k training steps.}
    \label{fig:convergence-s}
\end{figure*}

\section{Samples}
\Cref{fig:uncurated-agaric,fig:uncurated-alp,fig:uncurated-arctic_fox,fig:uncurated-daisy,fig:uncurated-jay,fig:uncurated-macaw,fig:uncurated-saint_bernard} show samples from all trained models under the same seed. This make it possible to directly compare the generations of the different models. They were all generated with guidance scale \num{5.0} and EMA $\sigrel$ of $10\%$. They are best viewed zoomed in.

\newcommand{\FourTwo}[2]{
\begin{subfigure}[b]{0.42\textwidth}
    \centering
    \captionsetup{labelformat=empty}
    \includegraphics[width=\textwidth,keepaspectratio]{#1}
    \caption{#2}
\end{subfigure}
}

\NewDocumentCommand\UncuratedSamples{mmm}{%
\begin{figure*}[p]
    \centering
    \FourTwo{figures/uncurated/A-XS-2/#1.png}{\DIT{XS}{2} (Config A).} \hspace{5mm}
    \FourTwo{figures/uncurated/A-S-4/#1.png}{\DIT{S}{4} (Config A).}
    \par\medskip
    \FourTwo{figures/uncurated/B-XS-2/#1.png}{\DIT{XS}{2} (Config B).} \hspace{5mm}
    \FourTwo{figures/uncurated/B-S-4/#1.png}{\DIT{S}{4} (Config B).}
    \par\medskip
    \FourTwo{figures/uncurated/C-XS-2/#1.png}{\DIT{XS}{2} (Config C).} \hspace{5mm}
    \FourTwo{figures/uncurated/C-S-4/#1.png}{\DIT{S}{4} (Config C).}
    \par\medskip
    \FourTwo{figures/uncurated/D-XS-2/#1.png}{\DIT{XS}{2} (Config D).} \hspace{5mm}
    \FourTwo{figures/uncurated/D-S-4/#1.png}{\DIT{S}{4} (Config D).}
    \par\medskip
    \FourTwo{figures/uncurated/E-XS-2/#1.png}{\DIT{XS}{2} (Config E).} \hspace{5mm}
    \FourTwo{figures/uncurated/E-S-4/#1.png}{\DIT{S}{4} (Config E).}
    
    \caption{Uncurated samples of \DIT{XS}{2} and \DIT{S}{4} across all configurations. \\ \hspace{\textwidth}
    Guidance scale = $5.0$ \\ \hspace{\textwidth}
    EMA relative standard-deviation = \SI{10}{\percent}\\\hspace{\textwidth}
    Class label = ``#2'' (#3)}
    \label{fig:uncurated-#1}
\end{figure*}%
}

\UncuratedSamples{agaric}{Agaric}{992}
\UncuratedSamples{alp}{Alp}{970}
\UncuratedSamples{arctic_fox}{Arctic fox}{279}
\UncuratedSamples{daisy}{Daisy}{985}
\UncuratedSamples{jay}{Jay}{17}
\UncuratedSamples{macaw}{Macaw}{88}
\UncuratedSamples{saint_bernard}{St. Bernard}{247}

\end{document}